\documentclass{article}

\PassOptionsToPackage{numbers,compress}{natbib}

\usepackage{microtype}
\usepackage{graphicx}
\usepackage{subcaption}
\usepackage{booktabs} 

\usepackage{hyperref}

\usepackage[accepted]{icml2026}

\usepackage[utf8]{inputenc} 
\usepackage[T1]{fontenc}    
\usepackage{hyperref}       
\usepackage{url}            
\usepackage{booktabs}       
\usepackage{amsfonts}       
\usepackage{nicefrac}       
\usepackage{microtype}      
\usepackage{xcolor}         

\usepackage{algorithmic}
\usepackage{multirow}
\usepackage{csquotes}
\usepackage{tikz}
\usepackage{subcaption}
\usepackage{colortbl}
\usepackage{tabularx}
\usepackage{diagbox}
\usepackage{makecell}
\usepackage{array}
\usepackage{makecell}
\usepackage{arydshln}
\usepackage{pdfpages}
\usepackage{graphicx}
\usepackage{comment}
\usepackage{xspace}
\usepackage{enumitem}
\usepackage{color}
\usepackage{algorithm}
\usepackage{amsmath, amssymb, amsthm, mathtools, cleveref}
\usepackage{wrapfig}
\usepackage{siunitx}       
\usepackage{threeparttable}
\usetikzlibrary{positioning,arrows.meta,decorations.pathreplacing}

\usepackage{tcolorbox}
\usepackage[table]{xcolor}
\tcbset{colback=white, colframe=black!70, boxrule=0.8pt, arc=2mm}

\newcolumntype{Y}{>{\centering\arraybackslash}X}

\newcommand{\algname}{NAVI}

\theoremstyle{plain}
\newtheorem{theorem}{Theorem}[section]

\newtheorem{lemma}[theorem]{Lemma}

\theoremstyle{definition}
\newtheorem{definition}[theorem]{Definition}

\theoremstyle{remark}

\icmltitlerunning{Segment-driven Structural Induction and Semantic Alignment for Heterogeneous Tabular Representation}

\begin{document}

\twocolumn[
  \icmltitle{Segment-driven Structural Induction and Semantic Alignment for Heterogeneous Tabular Representation}



  \icmlsetsymbol{equal}{*}

  \begin{icmlauthorlist}
    \icmlauthor{Woojun Jung}{sch}
    \icmlauthor{Susik Yoon}{sch}
  \end{icmlauthorlist}

  \icmlaffiliation{sch}{Department of Computer Science \& Engineering, Korea University, Seoul, South Korea}

  \icmlcorrespondingauthor{Susik Yoon}{susik@korea.ac.kr}

  \icmlkeywords{Machine Learning, ICML}

  \vskip 0.3in
]



\printAffiliationsAndNotice{}  

\begin{abstract}
  Real-world domains often contain heterogeneous tables whose headers vary while their underlying attribute semantics are shared, making it difficult to induce domain-specialized semantics from table-local evidence alone. Existing encoders model parts of this problem, but often underuse column-level value distributions and apply uniform objectives across attributes with different semantic roles. We propose NAVI, a segment-centric pretraining framework that treats each header–value pair as the unit for aggregating schema-level structural evidence and column-level distributional evidence. We realize this design through Masked Segment Modeling and Entropy-driven Segment Alignment, which jointly enforce structured header–value coupling and semantic alignment across stable and instance-specific attributes. Experiments on heterogeneous in-domain tables show improved reconstruction, semantic consistency, and downstream utility across evaluation settings overall. The source code of NAVI is available at: \url{https://github.com/woojoonjung/NAVI/}.
\end{abstract}

\section{Introduction}
\label{sec:intro}

\begin{figure}
    \centering
    \includegraphics[width=\columnwidth]{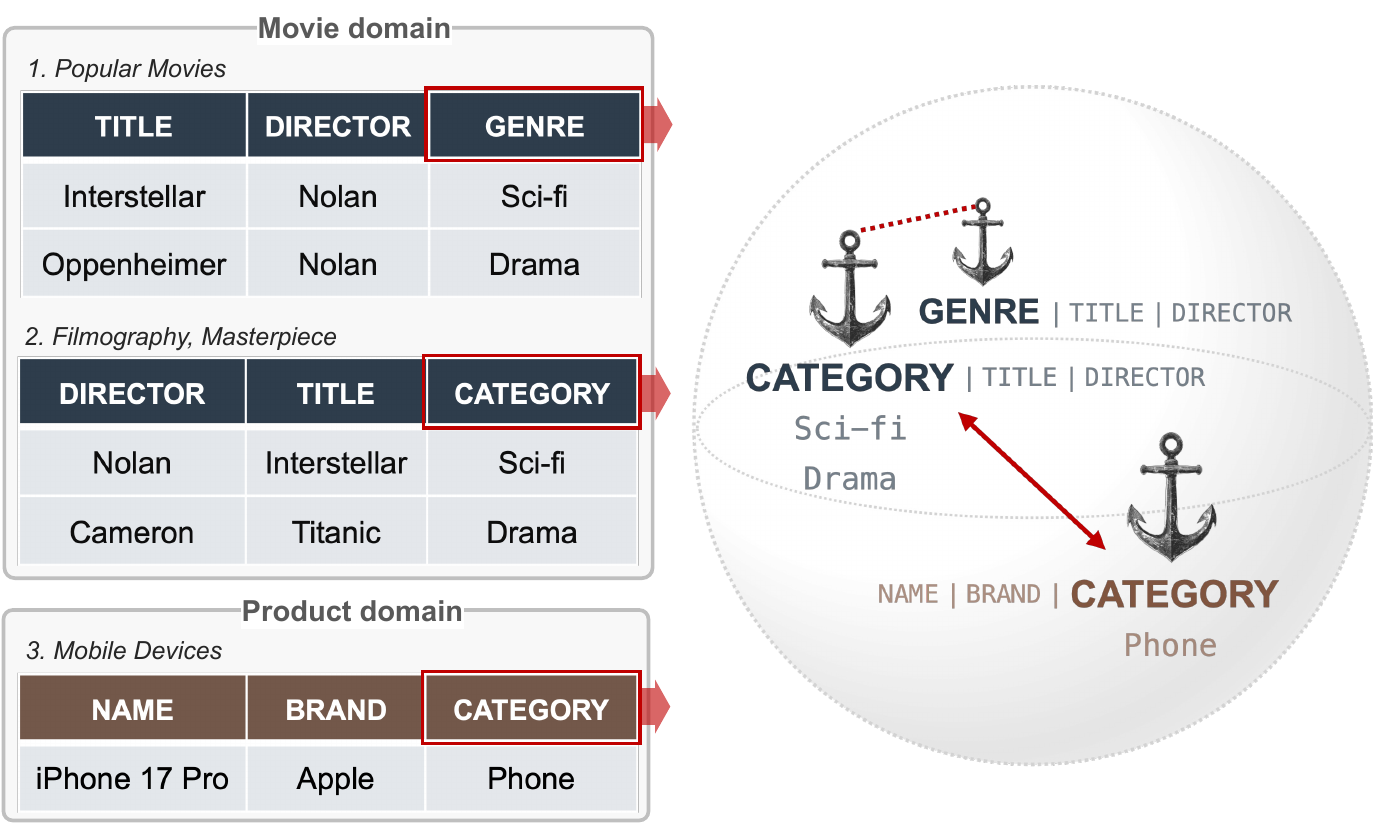}
    \caption{
    Semantically similar attributes may appear under different headers, while identical headers may correspond to different attribute semantics across domains.
    }
    \label{fig:motivation}
    \vspace{-5mm}
\end{figure}

Tables are among the most prevalent modalities in a modern society, serving as the primary interface for structured information in diverse domains, ranging from e-commerce to media platforms. Real-world tabular data in these domains is rarely confined to a single table; rather, it usually spans a collection of heterogeneous tables that collectively define shared domain knowledge. While these in-domain tables differ in their schema, they are frequently analyzed jointly to support downstream, domain-specific applications. 

Unlike unstructured text, where semantics are primarily conveyed through token composition and linguistic context, tables organize semantics around \textit{attributes} that are typically realized through table headers. Within a domain, attribute semantics should be understood as domain-specialized semantic posteriors inferred from evidence accumulated across heterogeneous tables. The primary challenge in this setting is that these semantics cannot be treated as fixed lexical meanings inferred directly from header names alone.

Instead, such semantics emerge from two complementary sources of contextual evidence. First, \emph{schema context} provides structural evidence through co-occurring headers. For example, headers such as \texttt{TITLE}, \texttt{DIRECTOR}, and \texttt{GENRE} collectively indicate a movie-related schema. Second, \emph{column context} provides distributional evidence through empirical value realizations accumulated across rows and tables. In Figure~\ref{fig:motivation}, the interpretation of \texttt{CATEGORY} depends on whether its values correspond to movie genres (e.g., Drama) or product types (e.g., Phone).

Moreover, attributes can assume heterogeneous semantic roles across tables. Some attributes correspond to stable domain-coherent properties, while others primarily distinguish individual entities. For example, \texttt{DIRECTOR} acts as a shared domain property in `Popular Movies', but behaves more like an entity-identifying attribute in `Filmography'. This suggests that column-level evidence should not contribute uniformly when refining attribute semantics.

Existing tabular learning methods only partially address these challenges. Structure-aware encoders ~\citep{tapas,tabert,tabbie,turl} primarily focus on contextual interactions within individual tables, but often assume that header semantics are locally recoverable from table structure alone. More recent approaches~\citep{CM2,tpberta,haetae} improve robustness to heterogeneous feature types or schema variation, yet still treat headers largely as surface-level schema tokens without explicitly modeling how attribute semantics emerge jointly from schema-level and column-level evidence accumulated across tables. Furthermore, existing methods apply uniform representation objectives across columns, overlooking the heterogeneous semantic roles that attributes may assume across tables.

Motivated by these limitations, we propose \textsc{NAVI} (E\underline{N}tropy-driven \underline{A}lignment with Header--\underline{V}alue \underline{I}nduction), a dual-context pretraining framework for heterogeneous tabular representation learning. The core abstraction of NAVI is the \textit{segment}, a symbolic header--value realization that serves as the fundamental unit for aggregating structural and distributional evidence across tables. NAVI instantiates this idea through (i) \textit{Masked Segment Modeling}, which captures schema-level structural context from co-occurring header--value realizations, and (ii) \textit{Entropy-driven Segment Alignment}, which models column-level distributional context through role-aware contrastive alignment.

To our knowledge, NAVI is the first segment-centric framework to jointly leverage schema-level structural evidence and column-level distributional evidence, enabling attribute semantics to be refined into domain-specialized representations across heterogeneous in-domain tables. Furthermore, NAVI introduces a conceptual framework that explicitly distinguishes latent attribute semantics, domain attributes, and table-specific header realizations, providing a principled basis for interpreting semantic consistency across heterogeneous schemas. Extensive experiments demonstrate that NAVI consistently improves semantic consistency, robustness, and downstream representation quality over existing table representation methods.
\smallskip
\section{Related Work}
Existing works have studied tabular learning from diverse perspectives, ranging from tree-based learners~\citep{chen2016xgboost}, pretrained neural architectures for prediction~\citep{tabpfn}, and language model (LM)-based feature engineering~\citep{lim2025}. Recently, the symbolic and lexical information prevalent in many real-world tables has accelerated the adaptation of Transformer-based LMs to tabular data~\citep{fang2024large, tab_transforemer_survey}, leveraging contextual semantic priors learned from unstructured text~\citep{vaswani2017attention, bert}.

\subsection{Structure-aware Table Encoders}
Early attempts to represent tables with LMs addressed the structural discrepancy between natural language and tables by adapting Transformer architectures like BERT to navigate spatial hierarchies. Initial efforts such as TAPAS and TaBERT~\citep{tapas,tabert} inject table structure into LMs through role embeddings or vertical attention, enabling joint contextualization of natural language and tabular inputs. Subsequent works such as TABBIE, TUTA, and TURL~\citep{tabbie,tuta,turl} further emphasize intrinsic table structure, modeling row–column dependencies or relational constraints. However, these methods infer header semantics strictly within individual tables. Consequently, they fail to distinguish latent domain attributes from table-specific header realizations, limiting their ability to refine stable attribute semantics across heterogeneous in-domain tables.

\smallskip
\subsection{Feature-oriented Table Encoders}
A prominent paradigm focuses on modeling tabular semantics at a unified feature level. On one hand, TP-BERTa~\citep{tpberta} addresses data type heterogeneity, employing magnitude tokenization and intra-feature attention to map continuous numerical values alongside textual categorical data into a shared discrete embedding space. Conversely, CM2~\citep{CM2} prioritizes cross-table scalability, mapping headers and cells directly into unified text phrases for permutation-invariant masked table modeling. Despite such improvements, these methods still do not explicitly distinguish domain-level attribute semantics from their table-specific header semantics, limiting their ability to refine domain-specialized semantics across heterogeneous tables. Moreover, they apply uniform representation objectives across columns, overlooking the heterogeneous semantic roles that attributes may assume across tables.

\smallskip
\subsection{Domain-aware Table Encoders}
Recent work moves toward domain-aware representations, emphasizing transferable attribute semantics across in-domain tables rather than treating headers as table-confined semantic units. Specifically, HAETAE~\citep{haetae} introduces header-anchored pretraining, treating schemas as shared global structures within a domain. It models tabular semantics primarily through schema context and separately encodes headers to preserve stable schema priors. While this improves semantic stability, it does not explicitly accumulate column-level evidence across tables to refine domain-specialized attribute semantics, nor does it account for the heterogeneous semantic roles attributes assume. These limitations motivate a framework that jointly models schema and column context while selectively aligning column signals according to their semantic role.
\begin{figure*}[t]
    \centering
    \includegraphics[width=\textwidth]{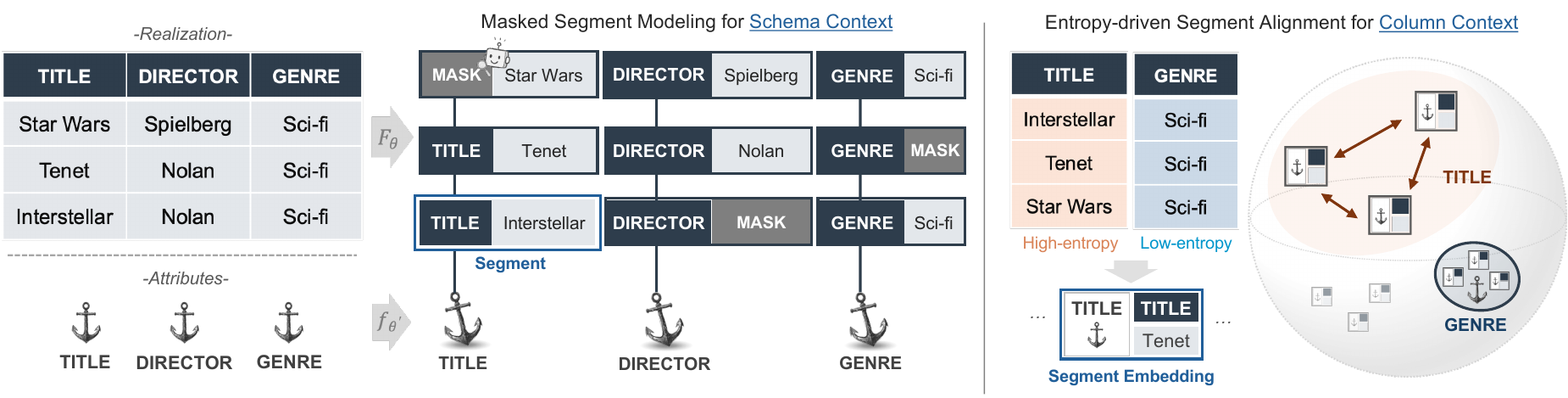} 
    \caption{Overall procedure of \algname{}. We jointly optimize NAVI with masked segment modeling and entropy-driven segment alignment.}
    \label{fig:method}
    \vspace{-0.3cm}
\end{figure*}

\section{Methodology}
\label{sec:method}

As shown in Figure~\ref{fig:method}, \algname{} models heterogeneous tabular semantics through two complementary segment-centric learning processes. The upper branch captures \emph{schema context} via masked segment modeling over schema-preserving row serialization, enabling structural dependencies among co-occurring header--value realizations to be learned. The lower branch captures \emph{column context} through entropy-driven segment alignment, where segment embeddings are contrastively aligned according to the empirical distributional characteristics of their associated attributes. Together, these components refine attribute semantics through structural schema evidence and distributional column evidence accumulated across heterogeneous tables.

\subsection{Dual-context Representation Framework}
\label{sec:dual_context}
We distinguish between domain attributes, their contextual realizations, and the latent semantic concepts they represent in heterogeneous tables.  Let $\mathcal{A}$ denote the set of domain attributes, where each attribute $a \in \mathcal{A}$ is represented by a canonical textual identifier shared across tables. Headers are viewed as table-specific schema realizations of domain attributes. Our objective is to recover latent domain concepts from contextual evidence accumulated across tables.

\paragraph{Schema and Column Contexts.}
We model attributes through two complementary contexts. \emph{Schema context} refers to contextual evidence involving schema elements (i.e., headers). In this work, we operationalize schema context through row-wise header--value co-occurrence patterns. For each table $t \in \mathcal{T}$, let $\mathcal{H}_t = \{h_1,\dots,h_K\}$ denote its observed schema. Each header $h_k \in \mathcal{H}_t$ is associated with an attribute $a_k \in \mathcal{A}$. A row $r \in t$ represents an entity as:

\begin{equation*}
r = \{(h_k, v_{r,k})\}_{k=1}^{K},
\end{equation*}

where $v_{r,k}$ denotes the value associated with header $h_k$ in row $r$. By jointly modeling headers and their corresponding value realizations within rows, schema context provides structural evidence for refining attribute semantics.

\emph{Column context} captures semantic regularities induced from empirical value distributions accumulated across rows and tables. For a domain attribute $a_k \in \mathcal{A}$ observed in table $t$, we define the corresponding column context as:

\begin{equation*}
\mathcal{C}_t(a_k)=\{v_{r,k} \mid r \in t\}.
\end{equation*}

Together, schema and column contexts provide complementary structural and distributional evidence for understanding attribute semantics in heterogeneous tables.

\paragraph{Segment Formulation.}
To operationalize schema and column contexts, we define a \textit{segment} as the fundamental symbolic unit corresponding to a header--value pair within a table. Segments serve as the basic units through which structural and distributional evidence are aggregated across rows and tables. Unlike isolated cells, segments preserve the functional dependency between headers and values, enabling them to be modeled jointly.

We denote by $\mathbf{x}(\cdot)$ a tokenization function that maps an input text into a sequence of tokens. In particular, $\mathbf{x}(r)$ denotes the serialized token sequence constructed from row $r$.

\begin{definition}[Segment]
For a domain attribute $a_k \in \mathcal{A}$ and its associated header--value realization $(h_k, v_{r,k})$ in row $r$, we define a segment as:
\begin{equation*}
\mathbf{s}_{r,k}
=
[\mathbf{x}(h_k),\texttt{:},\mathbf{x}(v_{r,k})].
\end{equation*}
Segment $\mathbf{s}_{r,k}$ represents the symbolic realization of a domain attribute $a_k$ within a table.
\end{definition}

\paragraph{Attribute and Realization Embeddings.}
To model segments under schema and column contexts, we encode domain attributes and their table-specific realizations in a shared embedding space using two separate encoders. A lightweight encoder $f_{\theta'}$ independently maps attributes to context-free embeddings, while a Transformer encoder $F_\theta$ contextualizes header--value realizations within serialized rows. We next define the corresponding attribute and realization embeddings. Implementation details of $f_{\theta'}$ and $F_\theta$ are provided in Appendix~\ref{app:architecture}.

\begin{definition}[Context-free Attribute Embedding]
\label{def:glob_rep}
    For each domain attribute $a \in \mathcal{A}$, we define its context-free embedding using a lightweight encoder $f_{\theta'}$ as:
    \begin{equation*}
    \mathbf{e}_{a} = \mathrm{Pool}\big(f_{\theta'}(\mathbf{x}(a))\big),
    \end{equation*}
    where $\mathrm{Pool}(\cdot)$ denotes mean pooling and $f_{\theta'}$ is applied independently of any table instance.
\end{definition}

\begin{definition}[Contextualized Realization Embedding]
\label{def:cont_rep}
Given a tokenized row $\mathbf{x}(r)$ with length $L$, a Transformer encoder $F_\theta$ produces contextualized token embeddings:
\begin{equation}
[e_0, \ldots, e_{L-1}] = F_\theta(\mathbf{x}(r)).
\label{eq:contextual-emb}
\end{equation}
Let $\mathcal{S}^{(h)}_{r,k}$ and $\mathcal{S}^{(v)}_{r,k}$ denote the token spans corresponding to header $h_k$ and value $v_k$, respectively. We define the contextualized header and value representations as:
\begin{align*}
\mathbf{e}^{(h)}_{r,k} &= \mathrm{Pool}(\{\mathrm{e}_t \mid t \in \mathcal{S}^{(h)}_{r,k}\}), \\
\mathbf{e}^{(v)}_{r,k} &= \mathrm{Pool}(\{\mathrm{e}_t \mid t \in \mathcal{S}^{(v)}_{r,k}\}).
\end{align*}
\end{definition}

\subsection{Masked Segment Modeling}
\label{sec:masking}

\paragraph{Schema-preserving Serialization.} Given a row $r = \{(h_k, v_k)\}_{k=1}^{K}$, we construct its token sequence $\mathbf{x}(r)$ by concatenating segments into a linearized format:
\begin{align*}
\mathbf{x}(r) &= [\texttt{[CLS]}, \mathbf{s}_{r,1}, \texttt{[SEP]}, \ldots,\mathbf{s}_{r,K}, \texttt{[SEP]}],
\end{align*}
where each segment is delimited by a separator token (e.g., \texttt{[SEP]}). By serializing rows as sequences of segments, the encoder can model schema context through interactions among co-occurring header--value realizations. The serialization process is exemplified as follows:

\begin{minipage}{\columnwidth}
\vspace{0.1cm}
\small
\begin{tikzpicture}[>=stealth, node distance=0.0cm, baseline=(table.base)]
\node (table) {
\begin{tabular}{|c|c|}
\hline
\rowcolor{gray!30}
\textbf{MOVIE NAME} & \textbf{\scriptsize\ldots} \\
\hline
harry potter & \scriptsize\ldots \\
\hline
\ldots & \scriptsize\ldots \\
\hline
\end{tabular}
};

\node[right=0.0cm of table.east] (rlabel) {$r_1$};

\node[right=0.25cm of rlabel, align=left, yshift=0.0cm] (dict) {
    \parbox{9.0cm}{
        {\ttfamily
        {\scriptsize [CLS] MOVIE NAME :\\
        \hspace{0.1cm}harry potter [SEP]\ldots[SEP]}
        }
    }
};

\draw[->, thick] (rlabel.east) -- ([yshift=0.0cm]dict.west);

\draw [decorate,decoration={brace,amplitude=2pt,raise=-1pt}]
  ([xshift=1.0cm]dict.north west) --
  ([xshift=2.4cm]dict.north west)
  node[midway,above=-0pt] {$\mathcal{S}^{(h)}_{r_1,1}$};

\draw [decorate,decoration={brace,mirror,amplitude=2pt,raise=-1pt}]
  ([xshift=0.1cm]dict.south west) --
  ([xshift=1.9cm]dict.south west)
  node[midway,below=-0pt] {$\mathcal{S}^{(v)}_{r_1,1}$};

\end{tikzpicture}
\vspace{0.1cm}
\end{minipage}

\paragraph{Schema-aware Contextualization.}
To promote schema-level stability and reduce sensitivity to column permutations, we reinitialize positional embeddings within segment boundaries and augment token embeddings with the corresponding attribute embedding. Let $e_{\text{word}}(\cdot)$ denote the token embedding function and $P_j$ denote learnable positional embeddings. For a token $x_t$ belonging to segment $\mathbf{s}_{r,k}$ associated with attribute $a_k$, we define:
\begin{gather*}
e_{\text{pos}}(x_t) = P_j, \quad j = 0,\dots,l_k, \\
e_t = e_{\text{word}}(x_t) + e_{\text{pos}}(x_t) + \mathbf{e}_{a_k},
\end{gather*}
where $l_k$ is the number of tokens in segment $\mathbf{s}_{r,k}$. Here, $x_t$ denotes the $t$-th token in $\mathbf{x}(r)$, and $a_k \in \mathcal{A}$ is the attribute corresponding to header $h_k$, with $\mathbf{e}_{a_k}$ its attribute embedding. The encoder $F_\theta$ processes these embeddings to produce token representations as in Eq.~\eqref{eq:contextual-emb}. This formulation injects attribute priors into all tokens within a segment, enabling consistent schema-aware contextualization.

\paragraph{Structure-aware Masking.} Standard masked language modeling (MLM) has proven effective for natural language~\citep{bert}, but its direct application to tables is suboptimal  because it treats schema and value tokens uniformly. Headers represent attribute realizations, while values instantiate these attributes with instance-specific information. To explicitly model these dependencies, we introduce a masked segment modeling (MSM) that explicitly models schema–value dependencies by partitioning each row of segments into three masking regimes:

\begin{itemize}[leftmargin=1.5em]
    \item \textbf{Header-masked segments:} header tokens in selected segments are masked, forming the set $\mathcal{M}_{h}$. The model must recover header names from associated values.
    \item \textbf{Value-masked segments:} value tokens in selected segments are masked, forming the set $\mathcal{M}_{v}$. The model must infer values from headers and row context.
    \item \textbf{Vanilla MLM:} a random subset of remaining tokens is masked, forming $\mathcal{M}_{r}$. This regularizes the model to prevent overfitting header–value co-occurrences.
\end{itemize}

\paragraph{Objective.}
For each masked token $m \in \mathcal{M}$ with contextualized token embedding 
$e_t$, the classifier with parameters $\{W, b\}$ produces a logit vector $\mathbf{z}_m = W e_m + b \in \mathbb{R}^{|\mathcal{V}|}$. The masked segment modeling (MSM) loss is then given by the standard softmax cross-entropy:
\[
\mathcal{L}_{\text{msm}}
= -\frac{1}{|\mathcal{M}|}
\sum_{m \in \mathcal{M}}
\log 
\frac{\exp(\mathbf{z}_m[m])}{
\sum_{v \in \mathcal{V}} \exp(\mathbf{z}_m[v])
},
\]
where $\mathbf{z}_m[v]$ is the logit corresponding to item $v$ of vocabulary $\mathcal{V}$ at the position of $m$, and $\mathcal{M}$ is the union of $\mathcal{M}_h,\mathcal{M}_v,\mathcal{M}_r$. The MSM objective, coupled with structured masking, compels the encoder to capture the functional roles of tokens and header-value dependencies within the tabular schema.

\subsection{Entropy-driven Segment Alignment}
\label{sec:entropy}

Contrastive learning~\citep{contrastiveloss, simclr, uniclip} is widely used to organize representations according to a target semantic objective. Building on symbolic segments, we construct \textit{segment embeddings} that integrate context-free attribute semantics with contextualized header and value realizations. By aligning these embeddings in a distribution-aware manner, we enable attribute semantics to adapt to table-specific statistics while preserving semantic consistency across heterogeneous tables.

\begin{definition}[Segment Embeddings]
For a row $r \in t$ and a segment $\mathbf{s}_{r,k}$ associated with domain attribute $a_k \in \mathcal{A}$, we define the corresponding segment embedding as:
\begin{equation*}
\sigma_{r,k}
= g\big( \mathbf{e}_{a_k} \,\|\, \mathbf{e}^{(h)}_{r,k} \,\|\, \mathbf{e}^{(v)}_{r,k} \big),
\end{equation*}
where $\mathbf{e}_{a_k}$ is the context-free attribute embedding in Definition~\ref{def:glob_rep}, $\mathbf{e}^{(h)}_{r,k}$ and $\mathbf{e}^{(v)}_{r,k}$ are the contextualized header and value embeddings in Definition~\ref{def:cont_rep}, $\|\,$ denotes concatenation, and $g(\cdot)$ is a trainable two-layer feedforward network with GELU activation and layer normalization. 
\end{definition}

\paragraph{Entropy-based Attribute Categorization.}
To capture the heterogeneous roles of attributes within a table, we categorize them based on the empirical distribution of their column values. For each table $t \in \mathcal{T}$ and attribute $a \in \mathcal{A}^t$ observed in $t$, we compute the normalized entropy of its column:
\begin{equation*}
\mathrm{H}_t(a) = - \sum_{v \in \mathcal{C}_t(a)} p_t(v \mid a)\, \log p_t(v \mid a),
\end{equation*}
where $p_t(v \mid a)$ denotes the empirical frequency of value $v$ in its column $\mathcal{C}_t(a)$. Using fixed quantile thresholds $\theta_{\text{low}}$ and $\theta_{\text{high}}$, we partition $\mathcal{A}^t$, the set of domain attributes observed in table $t$, into two disjoint subsets:
\begin{align*}
\mathcal{A}_{\text{low}}^t &= \{ a \in \mathcal{A}_t \mid \mathrm{H}_t(a) \leq \theta_{\text{low}} \}, \\
\mathcal{A}_{\text{high}}^t &= \{ a \in \mathcal{A}_t \mid \mathrm{H}_t(a) \geq \theta_{\text{high}} \}.
\end{align*}

Low-entropy attributes ($\mathcal{A}_{\text{low}}^t$) correspond to stable, domain-coherent properties (e.g., \emph{genre}), while high-entropy attributes ($\mathcal{A}_{\text{high}}^t$) capture instance-specific variations (e.g., \emph{title}). This categorization is computed independently for each table and remains fixed during training.

\paragraph{Batch Construction.}
Training proceeds over stratified batches of rows sampled from multiple tables within the same domain. At each step, we sample a set of tables $\mathcal{T}_{\mathcal{B}} \subset \mathcal{T}$ and construct a batch $\mathcal{B}$ by drawing approximately $|\mathcal{B}| / |\mathcal{T}_{\mathcal{B}}|$ rows from each table.

Each row $r \in \mathcal{B}$ is associated with its source table $\mathrm{t}(r)$, and we denote $\mathcal{B}^t = \{ r \in \mathcal{B} \mid \mathrm{t}(r) = t \}$ as the subset of rows originating from table $t$. During training, segments from each row is aligned according to the attribute partition $(\mathcal{A}_{\text{low}}^t, \mathcal{A}_{\text{high}}^t)$ of its source table.

\paragraph{Learning Objective.}
Given a query $q$, a positive sample $x^+$, a set of negative samples $\mathcal{X}^-$, and a temperature $\tau$, the InfoNCE objective~\citep{contrastiveloss} is defined as:
\begin{align*}
& \mathcal{L}_{\text{NCE}}(q, x^+, \mathcal{X}^-, \tau) 
 \\ 
&=-\log \frac{ e^{\text{sim}(q, x^+) / \tau} }
{ e^{\text{sim}(q, x^+) / \tau} + \sum_{x^- \in \mathcal{X}^-} e^{\text{sim}(q, x^-) / \tau} }.
\end{align*}

For attributes $a_k \in \mathcal{A}_{\text{low}}^t$, we align segment embeddings with attribute embeddings, encouraging semantically consistent attribute representations across rows and tables:
\begin{align*}
\mathcal{L}_{\text{low}}^t
&= \mathbb{E}_{r \sim \mathcal{B}^t,\; a_k \in \mathcal{A}_{\text{low}}^t}
\Big[
\mathcal{L}_{\text{NCE}}\big(
\sigma_{r,k},\;
\mathbf{e}_{a_k},\;
\mathcal{E}_{\text{low}}^{-},\;
\tau_{\text{low}}
\big)
\Big],\\
\mathcal{E}_{\text{low}}^{-}
&= \{\, \mathbf{e}_{a'} \mid a' \in \mathcal{A}_{\text{low}}^t,\; a' \neq a_k \,\}.
\end{align*}

For attributes $a_k \in \mathcal{A}_{\text{high}}^t$, we align segment embeddings with contextualized value representations, promoting instance-level discrimination within each table:
\begin{align*}
\mathcal{L}_{\text{high}}^t
&= \mathbb{E}_{r \sim \mathcal{B}^t,\; a_k \in \mathcal{A}_{\text{high}}^t}
\Big[
\mathcal{L}_{\text{NCE}}\big(
\sigma_{r,k},\;
\mathbf{e}^{(v)}_{r,k},\;
\mathcal{E}_{\text{high}}^{-},\;
\tau_{\text{high}}
\big)
\Big],\\
\mathcal{E}_{\text{high}}^{-}
&= \{\, \mathbf{e}^{(v)}_{r',k} \mid r' \in \mathcal{B}^t,\; r' \neq r \,\}.
\end{align*}

Finally, given a batch $\mathcal{B}$ and a balancing hyperparameter $\lambda_{\text{align}}$, the full training objective is:
\begin{equation*}
\mathcal{L}_{\text{total}}
= \mathcal{L}_{\text{msm}} + \lambda_{\text{align}} \cdot \mathcal{L}_{\text{align}},
\end{equation*}
where $\mathcal{L}_{\text{align}}
= \frac{1}{|\mathcal{T}_{\mathcal{B}}|}
\cdot\sum_{t \in \mathcal{T}_{\mathcal{B}}}
\bigl(
\mathcal{L}_{\text{low}}^t + \mathcal{L}_{\text{high}}^t
\bigr)$. Appendix~\ref{sec:theoretical_analysis} provides a theoretical analysis of the contrastive alignment.
\section{Experiments}
In this section, we evaluate whether \algname{} effectively captures attribute semantics from both schema-level structural evidence and column-level distributional evidence across heterogeneous tables. We first assess schema-context modeling through generative reconstruction tasks, measuring the model's ability to capture functional dependencies between headers and values within rows. Next, we evaluate semantic consistency under lexical and structural variation, examining whether representations remain stable across heterogeneous header realizations while preserving domain-specialized attribute semantics. Finally, we evaluate downstream utility through row-level separability across diverse classifiers and the framework's integration as a robust semantic encoder within traditional feature-engineering pipelines.

\subsection{Experimental Setup}
\label{sec:exp_setup}
\paragraph{Datasets. }We evaluate on two domains, Movie and Product, using subsets of WDC WebTables~\citep{wdcwebtables}, selecting the 100 largest tables per domain. To balance domains, we downsample Product tables to a fixed target of 480,817 total rows via proportional (per-table) sampling. All tables are processed through a standardized pipeline (see Appendix~\ref{app:data}) to ensure compatibility with models that use BERT-style tokenization, and we split rows into train/validation/test with an 8:1:1 ratio. We utilize held-out subsets (10\% test split) per domain. We uniformly subsample 1,000 rows per each evaluation run.

\paragraph{Baseline Methods. }We evaluate \algname{} against representative table embedding models spanning major paradigms. BERT serves as the generic transformer backbone underlying most language model-based table encoders; its performance highlights the limitations of applying vanilla language models to tabular data. TAPAS, the most widely adopted table encoder, exemplifies structure-aware approaches, while CM2 and HAETAE represent feature-oriented and domain-aware encoders, respectively. This selection enables a systematic comparison of relevant work. 

\paragraph{Implementation. }We configure \algname{} to balance domain and structural objectives. For alignment, we set $\tau_{\text{low}}, \tau_{\text{high}}{=}0.1,0.02$, $\theta_{\text{low}}, \theta_{\text{high}}{=}5\%,95\%$, and $\lambda_{\text{align}}{=}0.0125$ by default. For masked segment modeling, we use the ratio of header-masked, value-masked, vanilla MLM segments is 4:4:2. All models are trained for 2 epochs (the point at which learning curves indicate empirical convergence) with a batch size of 32 using AdamW~\citep{adamw} with a learning rate of $3 \times 10^{-5}$, and a weight decay of 0.01. Additional implementation details appear in Appendix~\ref{app:implementation_detail}.

\subsection{Generative Task Performance}
\label{sec:downstream}

To evaluate schema-context modeling, we assess whether \algname{} can recover masked structural and semantic information from header--value dependencies within rows. Specifically, we consider two generative reconstruction tasks: \textit{Header Prediction}, which requires recovering masked headers from associated values and surrounding schema context, and \textit{Value Imputation}, which infers masked values conditioned on headers and neighboring row context.

\begin{table}[t]
    \small
    \centering
    \caption{Performance on header prediction and value imputation.}
    \begin{tabularx}{\columnwidth}{@{\extracolsep{\fill}} l c c c c}
    \toprule
    Model & \multicolumn{2}{c}{\textbf{Product}} & \multicolumn{2}{c}{\textbf{Movie}} \\
    \cmidrule(lr){2-3} \cmidrule(lr){4-5}
    & Header & Value & Header & Value \\
    \midrule
    BERT                & 0.9400 & 0.7743 & 0.9233 & 0.6889 \\
    TAPAS               & 0.2996 & 0.2501 & 0.2619 & 0.2326 \\
    HAETAE              & 0.9260 & 0.7785 & 0.8982 & 0.6911 \\
    \textbf{\algname{}} & \textbf{0.9995} & 0.7933 & \textbf{0.9998} & \textbf{0.7077} \\
    \hdashline
    w/o CAE             & 0.5996 & 0.5544 & 0.5485 & 0.4799 \\
    w/o MSM             & 0.9959 & 0.7769 & 0.9941 & 0.6792 \\
    w/o ESA             & 0.9993 & \textbf{0.7977} & 0.9994 & 0.7012 \\
    \bottomrule
    \end{tabularx}
    \label{tab:generative}
\end{table}

As shown in Table~\ref{tab:generative}, \algname{} achieves near-perfect header prediction performance across both domains, substantially outperforming existing baselines. This result indicates that the proposed schema-aware contextualization successfully preserves stable attribute semantics even under heterogeneous header realizations. Furthermore, \algname{} consistently achieves the best value imputation accuracy, suggesting that masked segment modeling effectively captures structural dependencies between headers and values within rows.

While the results in Table~\ref{tab:generative} evaluate exact token reconstruction, feature-oriented methods such as CM2 operate under a fundamentally different objective by reconstructing continuous feature embeddings instead of discrete subword tokens. To provide a more comparable evaluation across modeling paradigms, we further analyze semantic reconstruction quality in latent representation space.

For \algname{}, we construct semantic field representations by mean-pooling contextualized hidden states over all subword tokens belonging to a header and its value. We then compute the cosine similarity between representations obtained from a masked forward pass and those from the corresponding unmasked gold forward pass. This measures whether the model preserves the semantic identity of masked fields beyond exact token recovery. Figure~\ref{fig:similarity_dist} visualizes the resulting similarity distributions for \algname{} and CM2.

\begin{figure}[t]
    \centering    
    \includegraphics[width=\columnwidth]{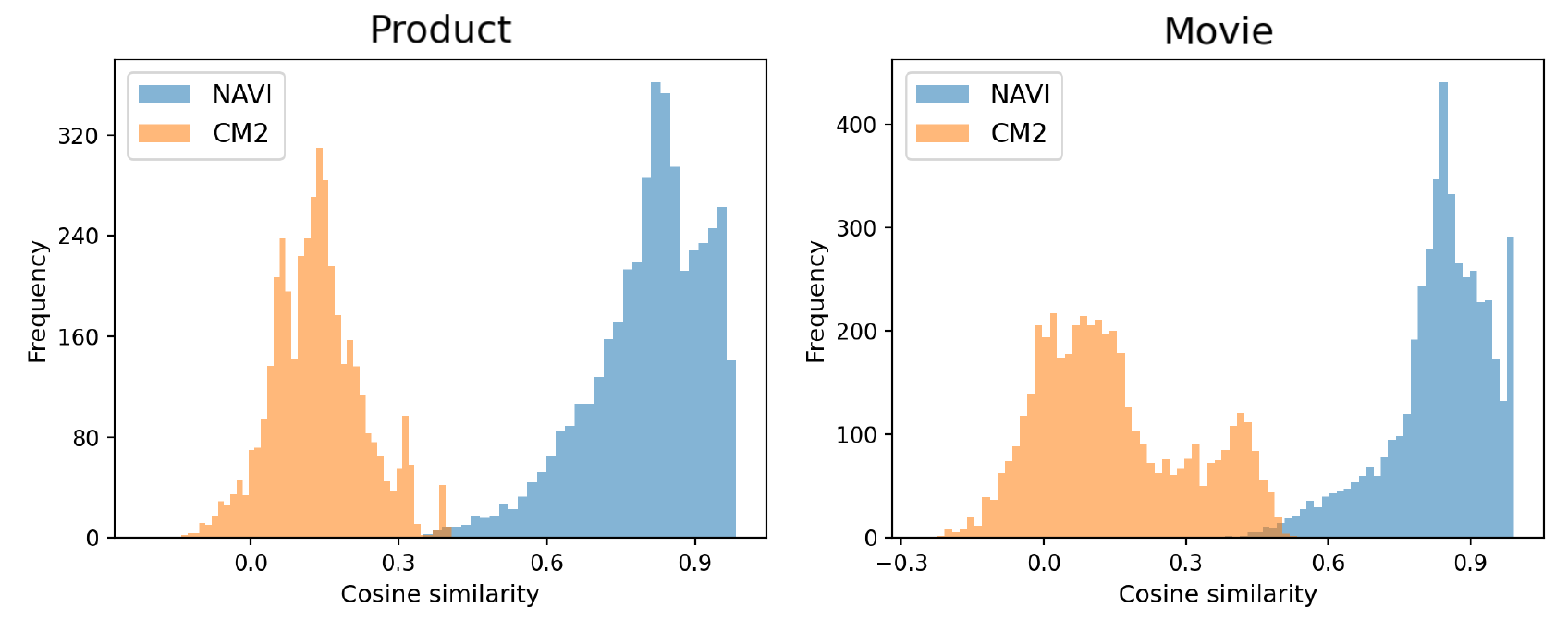}
    \caption{Distribution of cosine similarities between masked field representations and corresponding target representations.}
    \label{fig:similarity_dist}
\end{figure}

The distributions exhibit a clear separation between the two approaches across both domains. Notably, the lower quartile of \algname{} already exceeds the upper quartile of CM2 (Product: $Q1_{\textsc{NAVI}}=0.7400$ vs. $Q3_{\textsc{CM2}}=0.1903$; Movie: $0.7870$ vs. $0.2578$), indicating that even weaker reconstructions produced by \algname{} remain more semantically aligned with the target representations than the strongest reconstructions of CM2. These findings suggest that the proposed segment-centric schema modeling preserves substantially richer semantic information during masked reconstruction.

\begin{table*}[!t]
\small
\caption{Robustness of header prediction under schema perturbations. We evaluate the model's ability to reconstruct masked headers using perturbed schemas and their remaining context as reference. Accuracy is reported for the default test set and three perturbation settings: column permutation, synonym replacement, and typographical errors. For each perturbation, the highest accuracy is shown in bold, while the smallest relative performance drop from the default setting is highlighted in bold within parentheses.}
\centering
\label{tab:robustness}
\renewcommand{\arraystretch}{1.2}
\begin{tabularx}{\textwidth}{@{\extracolsep{\fill}} l  Y : ccc | Y : ccc}
\toprule &
\multicolumn{4}{c}{\textbf{Product}} & \multicolumn{4}{c}{\textbf{Movie}}\\
\cmidrule(lr){2-5}\cmidrule(lr){6-9}
 & Default & Col. Perm. & Synonym & Typo & Default & Col. Perm. & Synonym & Typo \\
\midrule
BERT                & .9400 & .8730 {\scriptsize(-7.12\%)} & .9189 {\scriptsize(-2.24\%)} & .8855 {\scriptsize(-5.80\%)} & .9233 & .8590{\scriptsize(-6.96\%)} & .9044 {\scriptsize(-2.05\%)} & .8762 {\scriptsize(-5.10\%)} \\
TAPAS               & .2996 & .2681 {\scriptsize(-10.51\%)} & .2457 {\scriptsize(-17.99\%)} & .2539 {\scriptsize(-15.25\%)} & .2619 & .2331 {\scriptsize(-11.00\%)} & .2472 {\scriptsize(-5.61\%)} & .2328 {\scriptsize(-11.11\%)} \\
HAETAE              & .9260 & .8619 {\scriptsize(-6.92\%)} & .9146 \textbf{{\scriptsize(-1.23\%)}} & .8800 \textbf{{\scriptsize(-4.97\%)}} & .8982 & .8189 {\scriptsize(-8.83\%)} & .8889 \textbf{{\scriptsize(-1.04\%)}} & .8504 {\scriptsize(-5.32\%)} \\
\textbf{\algname{}} & .9995 & \textbf{.9997 \textbf{{\scriptsize(+0.02\%)}}} & \textbf{.9760} {\scriptsize(-2.35\%)} & \textbf{.9375} {\scriptsize(-6.20\%)} & .9998 & \textbf{.9997 {\scriptsize(-0.01\%)}} & \textbf{.9870} {\scriptsize(-1.28\%)} & \textbf{.9522 {\scriptsize(-4.76\%)}} \\
\bottomrule
\end{tabularx}
\end{table*}

\smallskip
\subsection{Consistency of Header Representations}

To evaluate semantic consistency under heterogeneous schemas, we probe whether \algname{} can consolidate disparate surface forms into a coherent domain context---a prerequisite for robust reasoning across unaligned tables. Specifically, we assess the model's ability to transcend superficial lexical cues and recognize semantically equivalent attributes despite the \textit{lexical and statistical divergence} prevalent in real-world tables. Evaluation encompasses two distinct dimensions: first, the semantic consolidation of heterogeneous headers into stable tables, and second, the representational resilience of these schema anchors when reconstructing attributes from perturbed inputs and localized row context.

\begin{table}[t]
    \small
    \centering
    \caption{Performance on header clustering. NMI and B³-F1 scores for clustering (using Agglomerative).}
    \renewcommand{\arraystretch}{1.2}
    \begin{tabularx}{\columnwidth}{@{\extracolsep{\fill}} l c c c c}
    \toprule
    Model & \multicolumn{2}{c}{\textbf{Product}} & \multicolumn{2}{c}{\textbf{Movie}} \\
    \cmidrule(lr){2-3} \cmidrule(lr){4-5}
    & NMI & B³-F1 & NMI & B³-F1 \\
    \midrule
    BERT                & 0.8749 & 0.7317 & 0.8798 & 0.6969 \\
    TAPAS               & 0.8750 & 0.7239 & 0.8726 & 0.6759 \\
    HAETAE              & 0.8742 & 0.7268 & 0.8864 & 0.7276 \\
    \textbf{\algname{}} & \textbf{0.9005} & \textbf{0.7920} & \textbf{0.9144} & \textbf{0.7996} \\
    \bottomrule
    \end{tabularx}
    \label{tab:consistency}
\end{table}

\paragraph{Header Clustering.}
To evaluate semantic consistency under lexical diversity, we cluster semantically equivalent headers across tables using agglomerative clustering, measured by B$^3$-F1 and NMI. A model with consistent schema representations should group lexical variants of the same concept into compact, well-separated clusters. As shown in Table~\ref{tab:consistency}, \algname{} substantially outperforms all baselines across both Product and Movie domains, indicating more reliable alignment of semantically equivalent headers despite variations in surface form.

To further analyze the geometry of the learned representation space, we measure pairwise distances between lexical variants (e.g., \texttt{actor} vs.\ \texttt{actor names}) and evaluate cluster separability using the silhouette score. Compared to BERT, \algname{} produces smaller inter-variant distances under both cosine similarity (0.30 vs.\ 0.36) and L2 distance (0.72 vs.\ 0.77), indicating stronger semantic consolidation of lexical aliases. At the same time, \algname{} preserves clearer separation between distinct semantic groups (e.g., \texttt{actor} vs. \texttt{director}), achieving a substantially higher silhouette score (0.52 vs.\ 0.26). These results suggest that \algname{} improves both intra-semantic alignment and inter-semantic separation without collapsing distinct attribute semantics.

Figure~\ref{fig:umap_header} provides a qualitative illustration of this effect using the \texttt{actor} header group. Under \algname{}, lexical variants collapse into a coherent semantic cluster despite differences in surface realization. In contrast, BERT produces fragmented clusters that remain separated according to lexical form. This contrast indicates that BERT primarily encodes surface-level similarity, whereas \algname{} consolidates lexical aliases into stable schema-level semantic representations. Together, these findings demonstrate that entropy-driven segment alignment effectively enforces inter-table semantic consistency under heterogeneous schemas.

\begin{figure}[t]
    \centering 
    \includegraphics[width=\columnwidth]{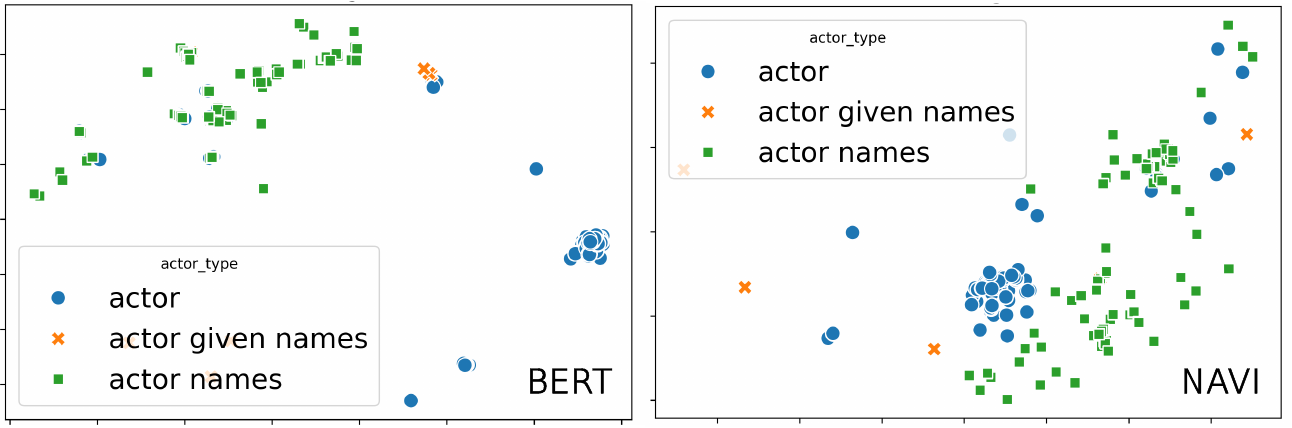}
    \caption{UMAP projection of contextualized header embeddings.}
    \label{fig:umap_header}
\end{figure}

\paragraph{Robustness to Schema Perturbations.}
We further evaluate the consistency of header representations by testing the model's resilience to controlled perturbations that reflect heterogeneity observed in real-world table corpora. We apply these perturbations only at inference time, using models trained strictly on clean schemas. Specifically, we consider three perturbation types. First, \emph{column permutation} involves randomly shuffling the column order within each row while maintaining header--value alignment. Second, we apply \emph{lexical perturbations} to 50\% of low-entropy headers by either performing \emph{synonym replacement} with unseen alternatives or introducing \emph{header typos} through 1--2 character edits.

Table~\ref{tab:robustness} reveals that while both NAVI and HAETAE provide baseline stability against lexical noise, they diverge significantly under structural drift. Against synonym and typographical perturbations, the performance gap is marginal, as both models leverage schema-level anchors to resist lexical drift. However, NAVI proves far more robust to structural variation; while HAETAE's accuracy drops up to 9\% under column permutations, NAVI remains nearly invariant. This distinction confirms that strictly schema-aware approaches still rely on table-local structural cues, whereas NAVI’s entropy-driven value alignment successfully decouples semantic roles from fragile structural artifacts.

\begin{table*}[!t]
\centering
\small
\setlength{\tabcolsep}{4pt}
\renewcommand{\arraystretch}{1.1}
\caption{Performance on discriminative task. We evaluate row classification performance using Macro-F1 scores across the Product and Movie datasets, employing four diverse classifiers (XGBoost, Logistic Regression (LR), and TabPFN) operating on either [CLS] token embeddings or raw features. Methods are categorized into LM Encoders and FE Pipelines. For each classifier-category pair, the best result is indicated in bold, while the second-best is underlined. The overall best performance across all methods for a given classifier and dataset is marked with a dagger ($\dagger$).}
\begin{tabularx}{\textwidth}{@{\extracolsep{\fill}} ll | ccc | ccc}
\toprule
& & \multicolumn{3}{c}{\textbf{Product}} & \multicolumn{3}{c}{\textbf{Movie}} \\
\cmidrule(lr){3-5} \cmidrule(lr){6-8}
\textbf{\ \ Category} & \textbf{Model} & XGBoost & LR & TabPFN & XGBoost & LR & TabPFN \\
\midrule
\multirow{7}{*}{\ \ LM Encoders} 
& BERT   & 0.912 {\scriptsize($\pm$ 0.013)} & 0.931 {\scriptsize($\pm$ 0.015)} & 0.937 {\scriptsize($\pm$ 0.020)} & 0.595 {\scriptsize($\pm$ 0.036)} & 0.638 {\scriptsize($\pm$ 0.043)} & 0.659 {\scriptsize($\pm$ 0.038)} \\
& TAPAS  & 0.911 {\scriptsize($\pm$ 0.013)} & 0.925 {\scriptsize($\pm$ 0.015)} & 0.930 {\scriptsize($\pm$ 0.013)} & \textbf{0.629 {\scriptsize($\pm$ 0.035)}$^\dagger$} & \textbf{0.669 {\scriptsize($\pm$ 0.023)}$^\dagger$} & \textbf{0.688 {\scriptsize($\pm$ 0.017)}$^\dagger$} \\
& CM2   & 0.814 {\scriptsize($\pm$ 0.023)} & 0.846 {\scriptsize($\pm$ 0.018)} & 0.842 {\scriptsize($\pm$ 0.017)} & 0.379 {\scriptsize($\pm$ 0.025)} & 0.375 {\scriptsize($\pm$ 0.020)} & 0.402 {\scriptsize($\pm$ 0.015)} \\
& HAETAE & \underline{0.915 {\scriptsize($\pm$ 0.015)}} & \underline{0.938 {\scriptsize($\pm$ 0.011)}} & \underline{0.933 {\scriptsize($\pm$ 0.013)}} & 0.602 {\scriptsize($\pm$ 0.050)} & 0.644 {\scriptsize($\pm$ 0.041)} & 0.664 {\scriptsize($\pm$ 0.016)} \\
& \textbf{NAVI}   & \textbf{0.928 {\scriptsize($\pm$ 0.010)}} & \textbf{0.939 {\scriptsize($\pm$ 0.012)}} & \textbf{0.946 {\scriptsize($\pm$ 0.013)}$^\dagger$} & \underline{0.615 {\scriptsize($\pm$ 0.027)}} & \underline{0.667 {\scriptsize($\pm$ 0.028)}} & \underline{0.679 {\scriptsize($\pm$ 0.030)}} \\
\multirow{4}{*}{\ \ } 
& \, w/o CAE  & 0.725 {\scriptsize($\pm$ 0.025)} & 0.854 {\scriptsize($\pm$ 0.026)} & 0.801 {\scriptsize($\pm$ 0.019)} & 0.265 {\scriptsize($\pm$ 0.027)} & 0.381 {\scriptsize($\pm$ 0.028)} & 0.303 {\scriptsize($\pm$ 0.021)} \\ 
& \, w/o MSM  & 0.845 {\scriptsize($\pm$ 0.024)} & 0.903 {\scriptsize($\pm$ 0.012)} & 0.894 {\scriptsize($\pm$ 0.015)} & 0.467 {\scriptsize($\pm$ 0.025)} & 0.560 {\scriptsize($\pm$ 0.027)} & 0.522 {\scriptsize($\pm$ 0.037)} \\ 
& \, w/o ESA  & 0.909 {\scriptsize($\pm$ 0.019)} & 0.930 {\scriptsize($\pm$ 0.012)} & 0.934 {\scriptsize($\pm$ 0.013)} & 0.623 {\scriptsize($\pm$ 0.010)} & 0.667 {\scriptsize($\pm$ 0.010)} & 0.674 {\scriptsize($\pm$ 0.010)} \\ 
\midrule
\multirow{3}{*}{\ \ FE Pipelines} 
& Raw    & 0.887 {\scriptsize($\pm$ 0.015)} & 0.829 {\scriptsize($\pm$ 0.013)} & 0.910 {\scriptsize($\pm$ 0.013)} & 0.457 {\scriptsize($\pm$ 0.038)} & 0.420 {\scriptsize($\pm$ 0.035)} & \underline{0.478 {\scriptsize($\pm$ 0.019)}} \\
& TableVec. & \underline{0.920 {\scriptsize($\pm$ 0.017)}} & \underline{0.915 {\scriptsize($\pm$ 0.014)}} & \underline{0.931 {\scriptsize($\pm$ 0.019)}} & \underline{0.622 {\scriptsize($\pm$ 0.014)}} & \underline{0.552 {\scriptsize($\pm$ 0.029)}} & N/A \\
& \textbf{NAVI$_{\text{FE}}$} & \textbf{0.944 {\scriptsize($\pm$ 0.017)}$^\dagger$} & \textbf{0.942 {\scriptsize($\pm$ 0.012)}$^\dagger$} & \textbf{0.942 {\scriptsize($\pm$ 0.015)}} & \textbf{0.629 {\scriptsize($\pm$ 0.027)}} & \textbf{0.661 {\scriptsize($\pm$ 0.028)}} & \textbf{0.665 {\scriptsize($\pm$ 0.031)}} \\
\bottomrule
\end{tabularx}
\label{tab:discriminative}
\end{table*}

\subsection{Downstream Discriminative task}

To evaluate downstream utility, we assess whether semantically consistent representations learned from heterogeneous tables improve row-level discriminative performance. For LM-based encoders, each row is serialized and encoded once using the final \texttt{[CLS]} embedding as the fixed feature vector. We report macro-F1 on 10 balanced classes per domain (top product categories; top movie genres) using XGBoost, Logistic Regression, and TabPFN, averaged over eight subsampled runs (1,000 rows/run). 

As shown in Table~\ref{tab:discriminative}, \algname{} achieves the strongest or near-strongest performance across all classifiers in the Product domain, confirming that resolving semantic asymmetry via entropy-driven alignment improves entity-level separability. However, performance dynamics shift in the Movie domain, where the schema is characterized by high redundancy (e.g., \textit{actor.1}, \textit{actor.2}). In this specific context, TAPAS leverages its strong intra-table structural priors to achieve the highest scores. Conversely, the Product domain features a semantically more diverse and cleaner schema, allowing \textsc{NAVI}'s ability to reconcile global anchors with instance context to yield more significant gains. 

Beyond LM-based encoders, we evaluate feature-engineering pipelines on the same task, including raw-feature baselines and TableVectorizer. While TableVectorizer performs type-aware preprocessing, it produces very high-dimensional representations in text-heavy domains such as Movie ($>$2,000 dimensions), making it incompatible with TabPFN and substantially larger than NAVI’s 768-dimensional embeddings, yet without outperforming NAVI. To illustrate the benefit of replacing generic text encoders, we construct a simple hybrid model, $\text{NAVI}_{\text{FE}}$, which concatenates raw numerical features with NAVI’s text-derived segments. This hybrid already outperforms both raw-feature and TableVectorizer baselines, indicating that NAVI can serve as an effective drop-in semantic encoder within feature-engineering pipelines.

\subsection{Ablation Study}
We conduct ablation studies to analyze the contributions of \algname{} from three perspectives: (1) the effectiveness of its core modeling components, (2) the role of entropy-based categorization in entropy-driven segment alignment, and (3) the influence of numerical features within the dataset.

\textbf{Main Components.}
We ablate the core components of \algname{} to evaluate their individual contributions to schema-context modeling and cross-table semantic consistency. One component is removed at a time while keeping all other settings fixed. We evaluate three variants: (1) w/o Context-free Attribute Embedding (CAE), removing the injection of transferable attribute priors into token and segment representations; (2) w/o Masked Segment Modeling (MSM), replacing structured segment masking with vanilla MLM; and (3) w/o Entropy-driven Segment Alignment (ESA), removing entropy-driven contrastive alignment across segments.

As shown in Table~\ref{tab:generative}, removing CAE and MSM causes substantial degradation on generative reconstruction tasks, indicating that transferable attribute priors and segment-aware masking are both essential for modeling header--value dependencies. In Table~\ref{tab:discriminative}, removing ESA substantially reduces Product performance, while its effect is weaker in Movie. This suggests that ESA is most beneficial when schemas are semantically diverse and require cross-table alignment, whereas Movie contains more redundant schema patterns.

\begin{table}[h]
    \centering
    \small
    \caption{Effect of entropy-based routing in ESA. Imp denotes value imputation accuracy, and Cls denotes classification performance using XGBoost. Classification results are reported over five runs.}
    \begin{tabularx}{0.46\textwidth}{lcccc}
    \toprule
    & \multicolumn{2}{c}{\textbf{Product}} & \multicolumn{2}{c}{\textbf{Movie}} \\
    \cmidrule(lr){2-3} \cmidrule(lr){4-5}
    & Imp & Cls & Imp & Cls \\
    \midrule
    \algname{} & \textbf{0.7933} & \textbf{0.928$\pm$0.010} & \textbf{0.7077} & \textbf{0.615$\pm$0.021} \\
    RR & 0.7903 & 0.902$\pm$0.022 & 0.7071 & 0.592$\pm$0.022 \\
    \bottomrule
    \end{tabularx}
    \label{tab:entropy_routing}
\end{table}

\paragraph{Entropy-based Categorization.}
To evaluate whether ESA benefits specifically from entropy-based categorization, we compare it with a random routing (RR) variant that assigns alignment objectives to randomly selected columns. As shown in Table~\ref{tab:entropy_routing}, RR consistently reduces row classification performance while producing smaller but consistent drops on value imputation. These results suggest that entropy-based routing provides an effective inductive bias by distinguishing stable domain attributes from highly instance-specific attributes during alignment.

\paragraph{Influence of Numerical Features.}
Unlike feature-oriented table encoders that explicitly model numerical feature types, \algname{} primarily focuses on refining domain-specialized semantics from textual evidence. To evaluate whether its performance nevertheless depends heavily on numerical regularities, we analyze three settings on row classification with XGBoost: (1) \textit{All}, using the full tables; (2) \textit{Text-only}, removing numerical columns; and (3) \textit{Partial-text}, reducing textual coverage while preserving table structure.

Across five runs, \algname{} achieves $0.928 \pm 0.010$ under the full setting, $0.902 \pm 0.014$ under the text-only setting, and $0.888 \pm 0.015$ under the partial-text setting. The relatively small degradation after removing numerical columns indicates that performance is not primarily driven by numerical regularities. In contrast, reducing textual semantic evidence causes noticeably larger degradation, suggesting that \algname{} mainly derives its gains from modeling textual semantics rather than specialized numerical feature encoding.

\subsection{Geometry of Segment Embeddings}
To empirically validate the advantages of Entropy-driven Segment Alignment, we analyze the representational geometry of segments from BERT and NAVI trained on the Movie domain. This visualization serves as a direct probe into how our alignment objective resolves the tension between schema stability and local instance discriminability. For BERT, segment embeddings are obtained by mean-pooling contextualized token embeddings over each header–value span, and all embeddings are projected with t-SNE.

\begin{figure}[!h]
    \centering   
    \includegraphics[width=\columnwidth]{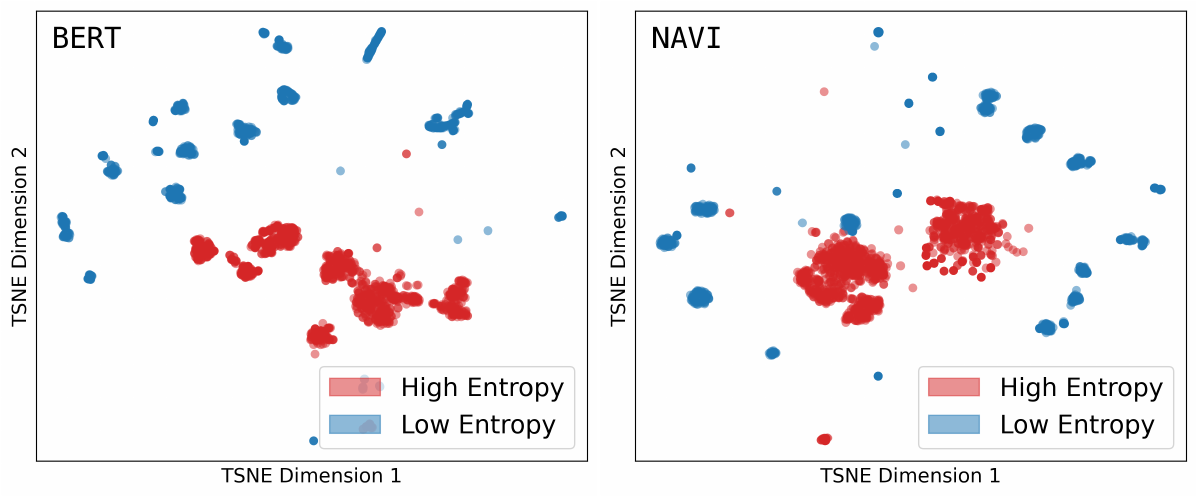}
    \caption{T-SNE visualization of segment embeddings from five heterogeneous Movie tables.}
    \vspace{-0.2cm}
    \label{fig:segments}
\end{figure}

As shown in Figure~\ref{fig:segments}, the segment geometry of BERT reveals the absence of explicit header–value alignment. While high- and low-entropy segments are roughly separable, low-entropy segments (intended as stable anchors) are widely scattered, reflecting the entanglement of schema semantics with row-specific noise. High-entropy segments further fragment into table-specific micro-clusters, indicating that similarity is driven by superficial artifacts rather than consistent entity semantics. This geometry reflects the lack of role-aware organization in contextual embeddings learned by conventional LM-based encoders.

In contrast, NAVI exhibits a structured geometry induced by entropy-driven segment alignment. Low-entropy segments collapse into compact, well-localized clusters that act as domain anchors, reflecting cross-header alignment around global semantic centroids. This contraction shows that NAVI consistently extracts schema-level semantics while suppressing table-specific variation. High-entropy segments form broader, dispersed clusters that preserve row separability within the domain. This selective geometry, contracting anchors while maintaining instance variability, confirms that \textsc{NAVI} successfully decouples stable domain logic from entity-specific content.

\section{Conclusion}
In this paper, we address the longstanding tension between domain-level semantic consistency and local instance specificity by introducing the header--value segment as the atomic unit for tabular representation learning. Through Masked Segment Modeling and Entropy-driven Segment Alignment, NAVI jointly models schema-level structural context and column-level distributional context, yielding near-perfect generative schema reconstruction and strong resilience to structural and lexical drift. Qualitatively, the resulting embedding space exhibits a structured geometry where low-entropy anchors provide global stability while high-entropy segments maintain instance-level discriminability. Furthermore, the superior performance of $\text{NAVI}_{\text{FE}}$ positions our framework as a potent drop-in semantic encoder for traditional feature-engineering pipelines. By bridging the gap between symbolic tabular structures and contextualized neural representations, this work establishes a robust foundation for future work in cross-domain transfer and the integration of unaligned, heterogeneous table corpora.

\section*{Acknowledgments}
This work was partly supported by the Institute of Information \& Communications Technology Planning \& Evaluation (IITP)-ICT Creative Consilience Program (IITP-2026-RS-2020-II201819), Artificial Intelligence Star Fellowship Support Program (IITP-2026-RS-2025-02304828), and the National Research Foundation of Korea (NRF) (RS-2024-00406320 and RS-2026-25494369) funded by the Korea government(MSIT).

\section*{Impact Statement}
This paper presents a framework for learning semantically consistent representations from heterogeneous in-domain tables. We believe this work can improve structured-data understanding and enable future applications involving large language model and table interactions, including table question answering, retrieval-augmented generation, and enterprise knowledge retrieval over heterogeneous tables. While our work is primarily methodological, its broader impact depends on how such representation systems are deployed and integrated into downstream applications.

\bibliographystyle{icml2026}
\bibliography{references}

@article{fang2024large,
  title={Large Language Models on Tabular Data: Prediction, Generation, and Understanding--A Survey},
  author={Fang, Xi and Xu, Weijie and Tan, Fiona Anting and Zhang, Jiani and Hu, Ziqing and Qi, Yanjun and Nickleach, Scott and Socolinsky, Diego and Sengamedu, Srinivasan and Faloutsos, Christos},
  journal={arXiv preprint arXiv:2402.17944},
  year={2024}
}

@article{tab_transforemer_survey,
  title={Transformers for Tabular Data Representation: A Survey of Models and Applications},
  author={Badaro, Gilbert and Saeed, Mohammed and Papotti, Paolo},
  journal={Transactions of the Association for Computational Linguistics},
  year={2023},
}

@inproceedings{chen2016xgboost,
  title={Xgboost: A scalable tree boosting system},
  author={Chen, Tianqi and Guestrin, Carlos},
  booktitle={Proceedings of the 22nd acm sigkdd international conference on knowledge discovery and data mining},
  pages={785--794},
  year={2016}
}

@article{ke2017lightgbm,
  title={Lightgbm: A highly efficient gradient boosting decision tree},
  author={Ke, Guolin and Meng, Qi and Finley, Thomas and Wang, Taifeng and Chen, Wei and Ma, Weidong and Ye, Qiwei and Liu, Tie-Yan},
  journal={Advances in neural information processing systems},
  volume={30},
  year={2017}
}

@article{huang2020tabtransformer,
  title={Tabtransformer: Tabular data modeling using contextual embeddings},
  author={Huang, Xin and Khetan, Ashish and Cvitkovic, Milan and Karnin, Zohar},
  journal={arXiv preprint arXiv:2012.06678},
  year={2020}
}

@inproceedings{somepalli2022saint,
  title={SAINT: Improved Neural Networks for Tabular Data via Row Attention and Contrastive Pre-Training},
  author={Somepalli, Gowthami and Schwarzschild, Avi and Goldblum, Micah and Bruss, C Bayan and Goldstein, Tom},
  booktitle={NeurIPS 2022 First Table Representation Workshop}
}

@article{gorishniy2021revisiting,
  title={Revisiting deep learning models for tabular data},
  author={Gorishniy, Yury and Rubachev, Ivan and Khrulkov, Valentin and Babenko, Artem},
  journal={Advances in neural information processing systems},
  volume={34},
  pages={18932--18943},
  year={2021}
}

@article{prokhorenkova2018catboost,
  title={CatBoost: unbiased boosting with categorical features},
  author={Prokhorenkova, Liudmila and Gusev, Gleb and Vorobev, Aleksandr and Dorogush, Anna Veronika and Gulin, Andrey},
  journal={Advances in neural information processing systems},
  volume={31},
  year={2018}
}

@inproceedings{kotelnikov2023tabddpm,
  title={Tabddpm: Modelling tabular data with diffusion models},
  author={Kotelnikov, Akim and Baranchuk, Dmitry and Rubachev, Ivan and Babenko, Artem},
  booktitle={International conference on machine learning},
  pages={17564--17579},
  year={2023},
  organization={PMLR}
}

@inproceedings{zheng2022diffusion,
  title     = {Diffusion Models for Missing Value Imputation in Tabular Data},
  author    = {Zheng, Shuhan and Charoenphakdee, Nontawat},
  booktitle = {Table Representation Learning Workshop at NeurIPS},
  year      = {2022}
}

@inproceedings{zhang2024mixed,
  title={Mixed-type tabular data synthesis with score-based diffusion in latent space},
  author={Zhang, Hengrui and Zhang, Jiani and Shen, Zhengyuan and Srinivasan, Balasubramaniam and Qin, Xiao and Faloutsos, Christos and Rangwala, Huzefa and Karypis, George},
  booktitle={International Conference on Learning Representations},
  volume={2024},
  pages={52829--52857},
  year={2024}
}

@inproceedings{zhang-etal-2024-tablellama,
    title = "{T}able{L}lama: Towards Open Large Generalist Models for Tables",
    author = "Zhang, Tianshu  and
      Yue, Xiang  and
      Li, Yifei  and
      Sun, Huan",
    booktitle = "Proceedings of the 2024 Conference of the North American Chapter of the Association for Computational Linguistics: Human Language Technologies (Volume 1: Long Papers)",
    year = "2024",
    pages = "6024--6044",
}

@inproceedings{lim2025,
    title = "Multi-level Diagnosis and Evaluation for Robust Tabular Feature Engineering with Large Language Models",
    author = "Lim, Yebin and
      Yoon, Susik",
    booktitle = "Findings of the Association for Computational Linguistics: EMNLP 2025",
    year = "2025",
    pages = "4630--4655",
}

@article{layernorm,
  title={Layer normalization},
  author={Ba, Jimmy Lei and Kiros, Jamie Ryan and Hinton, Geoffrey E},
  journal={arXiv preprint arXiv:1607.06450},
  year={2016}
}

@inproceedings{adamw,
  title={Decoupled Weight Decay Regularization},
  author={Loshchilov, Ilya and Hutter, Frank},
  booktitle={International Conference on Learning Representations}
}

@article{vaswani2017attention,
  title={Attention is all you need},
  author={Vaswani, Ashish and Shazeer, Noam and Parmar, Niki and Uszkoreit, Jakob and Jones, Llion and Gomez, Aidan N and Kaiser, {\L}ukasz and Polosukhin, Illia},
  journal={Advances in neural information processing systems},
  volume={30},
  year={2017}
}

@inproceedings{bert,
  title={{BERT}: Pre-training of Deep Bidirectional Transformers for Language Understanding},
  author={Devlin, Jacob and Chang, Ming-Wei and Lee, Kenton and Toutanova, Kristina},
  booktitle={Proceedings of the 2019 Conference of the North American Chapter of the Association for Computational Linguistics: Human Language Technologies},
  year={2019}
}

@inproceedings{tapas,
   title={Ta{P}as: Weakly Supervised Table Parsing via Pre-training},
   booktitle={Proceedings of the 58th Annual Meeting of the Association for Computational Linguistics},
   author={Herzig, Jonathan and Nowak, Pawel Krzysztof and Müller, Thomas and Piccinno, Francesco and Eisenschlos, Julian},
   year={2020} }

@inproceedings{tabert,
  title={Ta{BERT}: Pretraining for Joint Understanding of Textual and Tabular Data},
  author={Yin, Pengcheng and Neubig, Graham and Yih, Wen-tau and Riedel, Sebastian},
  booktitle={Proceedings of the 58th Annual Meeting of the Association for Computational Linguistics},
  year={2020}
}

@inproceedings{tabbie,
  title={Tabbie: Pretrained representations of tabular data},
  author={Iida, Hiroshi and Thai, Dung and Manjunatha, Varun and Iyyer, Mohit},
  booktitle={Proceedings of the 2021 Conference of the North American Chapter of the Association for Computational Linguistics: Human Language Technologies},
  pages={3446--3456},
  year={2021}
}

@article{turl,
  title={Turl: Table understanding through representation learning},
  author={Deng, Xiang and Sun, Huan and Lees, Alyssa and Wu, You and Yu, Cong},
  journal={ACM SIGMOD Record},
  volume={51},
  number={1},
  pages={33--40},
  year={2022},
  publisher={ACM New York, NY, USA}
}

@inproceedings{tuta,
  title={Tuta: Tree-based transformers for generally structured table pre-training},
  author={Wang, Zhiruo and Dong, Haoyu and Jia, Ran and Li, Jia and Fu, Zhiyi and Han, Shi and Zhang, Dongmei},
  booktitle={Proceedings of the 27th ACM SIGKDD Conference on Knowledge Discovery \& Data Mining},
  pages={1780--1790},
  year={2021}
}

@inproceedings{haetae,
  title={{HAETAE}: In-domain Table Pretraining with Header Anchoring},
  author={Jung, Woojun and Yoon, Susik},
  booktitle={Proceedings of the 48th International ACM SIGIR Conference on Research and Development in Information Retrieval},
  pages={3065--3069},
  year={2025}
}

@inproceedings{tapex,
  title={TAPEX: Table Pre-training via Learning a Neural SQL Executor},
  author={Liu, Qian and Chen, Bei and Guo, Jiaqi and Ziyadi, Morteza and Lin, Zeqi and Chen, Weizhu and Lou, Jian-Guang},
  booktitle={International Conference on Learning Representations}
}

@inproceedings{tabpfn,
  title={TabPFN: A Transformer That Solves Small Tabular Classification Problems in a Second},
  author={Hollmann, Noah and M{\"u}ller, Samuel and Eggensperger, Katharina and Hutter, Frank},
  booktitle={International Conference on Learning Representations},
  year={2023}
}

@inproceedings{tapera,
  title={TaPERA: enhancing faithfulness and interpretability in long-form table QA by content planning and execution-based reasoning},
  author={Zhao, Yilun and Chen, Lyuhao and Cohan, Arman and Zhao, Chen},
  booktitle={Proceedings of the 62nd Annual Meeting of the Association for Computational Linguistics (Volume 1: Long Papers)},
  pages={12824--12840},
  year={2024}
}

@inproceedings{tpberta,
  title={Making Pre-trained Language Models Great on Tabular Prediction},
  author={Yan, Jiahuan and Zheng, Bo and Xu, Hongxia and Zhu, Yiheng and Chen, Danny Z and Sun, Jimeng and Wu, Jian and Chen, Jintai},
  booktitle={International Conference on Learning Representations},
  year={2024}
}

@inproceedings{CM2,
  title={Towards cross-table masked pretraining for web data mining},
  author={Ye, Chao and Lu, Guoshan and Wang, Haobo and Li, Liyao and Wu, Sai and Chen, Gang and Zhao, Junbo},
  booktitle={Proceedings of the ACM Web Conference 2024},
  pages={4449--4459},
  year={2024}
}

@article{tablegpt2024,
  title={Table-gpt: Table fine-tuned gpt for diverse table tasks},
  author={Li, Peng and He, Yeye and Yashar, Dror and Cui, Weiwei and Ge, Song and Zhang, Haidong and Rifinski Fainman, Danielle and Zhang, Dongmei and Chaudhuri, Surajit},
  journal={Proceedings of the ACM on Management of Data},
  volume={2},
  number={3},
  pages={1--28},
  year={2024},
  publisher={ACM New York, NY, USA}
}

@inproceedings{CDTD,
  title={Continuous Diffusion for Mixed-Type Tabular Data},
  author={Mueller, Markus and Gruber, Kathrin and Fok, Dennis},
  booktitle={The Thirteenth International Conference on Learning Representations}
}

@inproceedings{kimstasy,
  title={STaSy: Score-based Tabular data Synthesis},
  author={Kim, Jayoung and Lee, Chaejeong and Park, Noseong},
  booktitle={The Eleventh International Conference on Learning Representations}
}

@inproceedings{simclr,
  title={A simple framework for contrastive learning of visual representations},
  author={Chen, Ting and Kornblith, Simon and Norouzi, Mohammad and Hinton, Geoffrey},
  booktitle={International conference on machine learning},
  pages={1597--1607},
  year={2020},
  organization={PmLR}
}

@article{uniclip,
  title={Uniclip: Unified framework for contrastive language-image pre-training},
  author={Lee, Janghyeon and Kim, Jongsuk and Shon, Hyounguk and Kim, Bumsoo and Kim, Seung Hwan and Lee, Honglak and Kim, Junmo},
  journal={Advances in Neural Information Processing Systems},
  volume={35},
  pages={1008--1019},
  year={2022}
}

@article{contrastiveloss,
  title={Representation learning with contrastive predictive coding},
  author={Oord, Aaron van den and Li, Yazhe and Vinyals, Oriol},
  journal={arXiv preprint arXiv:1807.03748},
  year={2018}
}

@inproceedings{wang2020understanding,
  title={Understanding contrastive representation learning through alignment and uniformity on the hypersphere},
  author={Wang, Tongzhou and Isola, Phillip},
  booktitle={International conference on machine learning},
  pages={9929--9939},
  year={2020},
  organization={PMLR}
}

@inproceedings{clark-etal-2019-bert,
    title = "What Does {BERT} Look at? An Analysis of {BERT}{'}s Attention",
    author = "Clark, Kevin  and
      Khandelwal, Urvashi  and
      Levy, Omer  and
      Manning, Christopher D.",
    booktitle = "Proceedings of the 2019 ACL Workshop BlackboxNLP: Analyzing and Interpreting Neural Networks for NLP",
    year = "2019",
    pages = "276--286",
}

@inproceedings{wdcwebtables,
author = {Peeters, Ralph and Brinkmann, Alexander and Bizer, Christian},
title = {The Web Data Commons Schema.org Table Corpora},
year = {2024},
booktitle = {Companion Proceedings of the ACM Web Conference 2024},
}

\onecolumn
\appendix

\clearpage

\section{Theoretical Analysis of Entropy-driven Segment Alignment}
\label{sec:theoretical_analysis}

We provide a geometric analysis of Entropy-driven Segment Alignment (ESA), the contrastive objective introduced in Section~\ref{sec:entropy}. Our goal is to analyze and characterize the local geometric forces induced by ESA on normalized segment embeddings, rather than to establish a generalization guarantee. Following the alignment--uniformity perspective of contrastive representation learning~\citep{wang2020understanding}, we interpret contrastive objectives as simultaneously pulling positive pairs together and pushing negative pairs apart. ESA specializes this principle to heterogeneous tables by conditioning the choice of positives and negatives on the empirical entropy of attributes.

Unlike conventional contrastive learning, which applies a uniform positive-pair construction to all representations, ESA routes segment embeddings according to the semantic role of their associated attributes. For low-entropy attributes, segments are aligned with context-free attribute embeddings, encouraging stable schema-level anchors shared across rows and tables. For high-entropy attributes, segments are aligned with row-specific value representations while contrasted against other rows from the same table, preserving instance-level separability. This routing induces two complementary geometric effects: low-entropy segments undergo anchor contraction, whereas high-entropy segments maintain controlled dispersion.

\subsection{Setup and Notation}

We analyze ESA in the segment embedding space. Let $\sigma_{r,k}$ denote the segment embedding for the $k$-th header--value realization in row $r$, as defined in Section~\ref{sec:entropy}. Let $\mathbf{e}_{a_k}$ be the context-free embedding of the associated domain attribute $a_k$, and let $\mathbf{e}^{(v)}_{r,k}$ be the contextualized value representation of value $v_{r,k}$. Throughout this analysis, we assume that all embeddings are $\ell_2$-normalized and lie on the unit hypersphere $\mathbb{S}^{d-1}=\{u\in\mathbb{R}^d:\|u\|_2=1\}$:

\begin{equation}
    \sigma_{r,k},\ \mathbf{e}_{a_k},\ \mathbf{e}^{(v)}_{r,k}
    \in \mathbb{S}^{d-1}.
\end{equation}

Accordingly, similarity is measured by the inner product, $\mathrm{sim}(u,v) = u^\top v$, which corresponds to cosine similarity on the unit hypersphere. We also use the standard identity

\begin{equation}
    \|u-v\|_2^2 = 2(1-u^\top v),
    \quad
    u,v \in \mathbb{S}^{d-1},
\end{equation}

which connects the dot-product form of InfoNCE to Euclidean geometry on normalized embeddings.

For a table $t$, ESA partitions its observed attributes into low-entropy and high-entropy subsets, denoted by $\mathcal{A}_{\mathrm{low}}^t$ and $\mathcal{A}_{\mathrm{high}}^t$, respectively. The entropy partition is computed from empirical column-value distributions before training and is treated as fixed during optimization.

For a low-entropy attribute $a_k \in \mathcal{A}_{\mathrm{low}}^t$, ESA uses the context-free attribute embedding $\mathbf{e}_{a_k}$ as the positive target for the segment embedding $\sigma_{r,k}$. The negatives are the context-free embeddings of other low-entropy attributes in the same table. Thus, the low-entropy loss for a segment is:

\begin{equation}
\ell_{\mathrm{low}}(\sigma_{r,k})
=
-\log \frac{\exp(\sigma_{r,k}^{\top}\mathbf{e}_{a_k}/\tau_{\mathrm{low}})}{\exp(\sigma_{r,k}^{\top}\mathbf{e}_{a_k}/\tau_{\mathrm{low}})+\sum_{a'\in\mathcal{A}_{\mathrm{low}}^t,\ a'\neq a_k}\exp(\sigma_{r,k}^{\top}\mathbf{e}_{a'}/\tau_{\mathrm{low}})}.
\label{eq:low_entropy_local_loss}
\end{equation}

This objective defines positive pairs between segment embeddings and shared attribute-level anchors.

For a high-entropy attribute $a_k \in \mathcal{A}_{\mathrm{high}}^t$, ESA instead uses the row-specific contextualized value representation $\mathbf{e}^{(v)}_{r,k}$ as the positive target. The negatives are contextualized value representations of the same attribute from other rows in the same table. Let $\mathcal{B}^t$ denote the subset of batch rows sampled from table $t$. The high-entropy loss for a segment is:

\begin{equation}
\ell_{\mathrm{high}}(\sigma_{r,k})
=
-\log \frac{\exp(\sigma_{r,k}^{\top}\mathbf{e}^{(v)}_{r,k}/\tau_{\mathrm{high}})}{\exp(\sigma_{r,k}^{\top}\mathbf{e}^{(v)}_{r,k}/\tau_{\mathrm{high}})+\sum_{r'\in\mathcal{B}^t,\ r'\neq r}\exp(\sigma_{r,k}^{\top}\mathbf{e}^{(v)}_{r',k}/\tau_{\mathrm{high}})}.
\label{eq:high_entropy_local_loss}
\end{equation}

This objective defines positive pairs between segment embeddings and row-specific value evidence, while using other rows as negatives.

This formulation highlights the key distinction of ESA: the same segment embedding $\sigma_{r,k}$ is regularized differently depending on the empirical entropy of its associated attribute. Low-entropy routing encourages segment embeddings to contract toward shared schema-level anchors, whereas high-entropy routing encourages them to remain distinguishable across rows. The following subsections formalize these two geometric effects.

\begin{lemma}[Gradient of local InfoNCE]
\label{lem:infonce_gradient}
Let $\sigma \in \mathbb{S}^{d-1}$ be a query representation, $x^{+}$ be its positive target, and $\mathcal{X}^{-}=\{x_j^{-}\}_{j=1}^{M}$ be a set of negative targets. Consider the local InfoNCE loss
\begin{equation}
\ell(\sigma)
=
-\log
\frac{
\exp(\sigma^\top x^{+}/\tau)
}{
\exp(\sigma^\top x^{+}/\tau)
+
\sum_{j=1}^{M}
\exp(\sigma^\top x_j^{-}/\tau)
}.
\label{eq:generic_local_infonce}
\end{equation}
Let $\mathcal{X}=\{x^{+}\}\cup\mathcal{X}^{-}$, and define the softmax probability of any $x\in\mathcal{X}$ as
\begin{equation}
p(x)
=
\frac{
\exp(\sigma^\top x/\tau)
}{
\sum_{\tilde{x}\in\mathcal{X}}
\exp(\sigma^\top \tilde{x}/\tau)
}.
\end{equation}
Then the negative gradient of $\ell$ with respect to $\sigma$ is
\begin{equation}
-\nabla_{\sigma}\ell
=
\frac{1}{\tau}
\left[
(1-p(x^{+}))x^{+}
-
\sum_{j=1}^{M}p(x_j^{-})x_j^{-}
\right].
\label{eq:generic_infonce_gradient}
\end{equation}
Thus, the descent direction contains an attractive component toward the positive target and repulsive components away from the negative targets.
\end{lemma}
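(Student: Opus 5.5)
The plan is to rewrite the loss in log-sum-exp form and differentiate directly, treating $\sigma$ as a free vector in $\mathbb{R}^d$. The targets $x^{+}$ and $x_j^{-}$ are held fixed, as is implicit in the statement. We compute the Euclidean gradient of the ambient function. Write $Z(\sigma)=\sum_{\tilde{x}\in\mathcal{X}}\exp(\sigma^\top\tilde{x}/\tau)$. Then
\begin{equation*}
\ell(\sigma) = -\frac{\sigma^\top x^{+}}{\tau} + \log Z(\sigma).
\end{equation*}
The first term is linear, so its gradient is $-x^{+}/\tau$.

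For the second term, we apply the chain rule:
\begin{equation*}
\nabla_\sigma \log Z = \frac{1}{Z}\sum_{\tilde{x}\in\mathcal{X}}\exp(\sigma^\top\tilde{x}/\tau)\,\frac{\tilde{x}}{\tau} = \frac{1}{\tau}\sum_{\tilde{x}\in\mathcal{X}}p(\tilde{x})\,\tilde{x},
\end{equation*}
which uses exactly the softmax probabilities $p(\cdot)$ defined in the statement. Splitting the sum over $\mathcal{X}=\{x^{+}\}\cup\mathcal{X}^{-}$ gives the term $p(x^{+})x^{+}$ plus $\sum_j p(x_j^{-})x_j^{-}$. Adding the linear part and negating yields
\begin{equation*}
-\nabla_\sigma\ell=\frac{1}{\tau}\Big[(1-p(x^{+}))x^{+}-\sum_{j=1}^{M}p(x_j^{-})x_j^{-}\Big],
\end{equation*}
which is Eq.~\eqref{eq:generic_infonce_gradient}.

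For the interpretive claim, note that $0<p(x)<1$ for every $x\in\mathcal{X}$ whenever $M\ge1$. Hence the coefficient $(1-p(x^{+}))/\tau$ on $x^{+}$ is strictly positive, so moving along $-\nabla_\sigma\ell$ increases $\sigma^\top x^{+}$, which is the attraction. Each coefficient $-p(x_j^{-})/\tau$ is strictly negative, so moving along it decreases $\sigma^\top x_j^{-}$, which is the repulsion. This can be made precise through the first-order change $\langle -\nabla\ell, x\rangle$ restricted to each component. Via the identity $\|u-v\|_2^2=2(1-u^\top v)$, these statements translate into decreasing or increasing Euclidean distance on the sphere.

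The only real subtlety, and the step I expect to need a remark, is the constraint $\sigma\in\mathbb{S}^{d-1}$. The formula is the ambient gradient. The Riemannian gradient would be its projection $(I-\sigma\sigma^\top)(-\nabla\ell)$, and the attraction and repulsion interpretation survives this projection up to the radial component. I would state the result for the ambient gradient, as the lemma does, and note the projection in passing rather than carry it through.
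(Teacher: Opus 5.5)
Your proposal is correct and matches the paper's proof. The paper also differentiates the log-sum-exp form directly to get $\nabla_\sigma\ell = \frac{1}{\tau}\bigl(\sum_{x\in\mathcal{X}}p(x)\,x - x^{+}\bigr)$, then negates and splits the sum over $\{x^{+}\}\cup\mathcal{X}^{-}$ exactly as you do.

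One small caution on your interpretive remark. The attraction and repulsion claims hold component by component, as you eventually say, but not for the full step along $-\nabla_\sigma\ell$ in general. For instance, $\langle -\nabla_\sigma\ell, x^{+}\rangle = \frac{1}{\tau}\bigl(\|x^{+}\|_2^2 - \sum_{x\in\mathcal{X}}p(x)\,x^\top x^{+}\bigr)$ is guaranteed nonnegative only when the targets are unit-norm.
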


\begin{proof}
Differentiating Eq.~\eqref{eq:generic_local_infonce} with respect to $\sigma$ gives
\begin{equation}
\nabla_{\sigma}\ell
=
\frac{1}{\tau}
\left(
\sum_{x\in\mathcal{X}}p(x)x - x^{+}
\right).
\end{equation}
Taking the negative gradient yields
\begin{equation}
-\nabla_{\sigma}\ell
=
\frac{1}{\tau}
\left(
x^{+} - \sum_{x\in\mathcal{X}}p(x)x
\right).
\end{equation}
Separating the positive and negative terms gives Eq.~\eqref{eq:generic_infonce_gradient}.
\end{proof}

\subsection{Low-entropy Alignment as Anchor Contraction}

We first analyze the low-entropy branch of ESA. Low-entropy attributes typically correspond to stable, domain-coherent properties whose values exhibit limited variation within a table. For such attributes, ESA aligns each segment embedding $\sigma_{r,k}$ with its corresponding context-free attribute embedding $\mathbf{e}_{a_k}$. Since the same attribute embedding is shared across rows and tables, this objective encourages segment realizations of the same attribute to contract toward a common schema-level anchor.

\begin{theorem}[Low-entropy ESA induces anchor-directed attraction]

\label{thm:low_entropy_contraction}

Consider a low-entropy attribute $a_k \in \mathcal{A}_{\mathrm{low}}^t$ and its segment embedding $\sigma_{r,k}$. Let $\ell_{\mathrm{low}}(\sigma_{r,k})$ be the low-entropy InfoNCE loss defined in Eq.~\eqref{eq:low_entropy_local_loss}. Then

\begin{equation}
-\nabla_{\sigma_{r,k}}\ell_{\mathrm{low}}
=
\frac{1}{\tau_{\mathrm{low}}}
\left[
(1-p_{a_k})\mathbf{e}_{a_k}
-
\sum_{a'\in\mathcal{A}_{\mathrm{low}}^t,\ a'\neq a_k}
p_{a'}\mathbf{e}_{a'}
\right],
\label{eq:low_gradient}
\end{equation}

where $p_{a_k}$ is the softmax probability assigned to the positive anchor $\mathbf{e}_{a_k}$, and $p_{a'}$ is the probability assigned to a negative anchor $\mathbf{e}_{a'}$. Therefore, minimizing the low-entropy ESA objective moves $\sigma_{r,k}$ toward its attribute anchor while pushing it away from competing low-entropy attribute anchors.

\end{theorem}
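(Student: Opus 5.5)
The plan is to obtain the theorem as a direct specialization of Lemma~\ref{lem:infonce_gradient}. In Eq.~\eqref{eq:generic_local_infonce} I will set the query to $\sigma=\sigma_{r,k}$, the positive target to $x^{+}=\mathbf{e}_{a_k}$, the negative set to $\mathcal{X}^{-}=\mathcal{E}_{\mathrm{low}}^{-}=\{\mathbf{e}_{a'} : a'\in\mathcal{A}_{\mathrm{low}}^t,\ a'\neq a_k\}$ with $M=|\mathcal{A}_{\mathrm{low}}^t|-1$, and the temperature to $\tau=\tau_{\mathrm{low}}$. With these choices the generic loss coincides term by term with Eq.~\eqref{eq:low_entropy_local_loss}. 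The softmax probabilities $p(x)$ of the lemma become $p_{a_k}=p(\mathbf{e}_{a_k})$ and $p_{a'}=p(\mathbf{e}_{a'})$, computed over the candidate set $\{\mathbf{e}_{a}: a\in\mathcal{A}_{\mathrm{low}}^t\}$. Substituting into Eq.~\eqref{eq:generic_infonce_gradient} then gives Eq.~\eqref{eq:low_gradient} immediately.

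Before invoking the lemma, I will check that its hypotheses hold. The attribute embeddings $\mathbf{e}_{a'}$ are produced by $f_{\theta'}$ and do not depend on $\sigma_{r,k}$, so in the gradient with respect to $\sigma_{r,k}$ they act as constants. The lemma treats targets this way too: it differentiates only through $\sigma^\top x$. I will also note that the entropy partition is fixed before training, so the candidate set does not vary with $\sigma_{r,k}$.

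For the interpretive claim, I will argue that every coefficient is nonnegative, since $1-p_{a_k}\ge 0$ and $p_{a'}\ge 0$. Hence the first term is a positive multiple of the anchor and the remaining terms are nonpositive multiples of the competing anchors. For a small step $\eta$, a first-order expansion of $\sigma_{r,k}^\top\mathbf{e}_{a_k}$ along the descent direction shows a change of
\[
\frac{\eta}{\tau_{\mathrm{low}}}\Bigl[(1-p_{a_k})-\sum_{a'\neq a_k}p_{a'}\,\mathbf{e}_{a'}^\top\mathbf{e}_{a_k}\Bigr].
\]
Since $\sum_{a'\neq a_k}p_{a'}=1-p_{a_k}$ and $\mathbf{e}_{a'}^\top\mathbf{e}_{a_k}\le 1$, this quantity is nonnegative. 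It is strictly positive unless $p_{a_k}=1$ or all negatives coincide with the anchor. Consequently, descent increases similarity to the anchor, and by the identity $\|u-v\|_2^2=2(1-u^\top v)$ it decreases the Euclidean distance to the anchor. A symmetric reading of the negative terms gives the repulsion from competing anchors.

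The main subtlety is the hypersphere constraint. Because $\sigma_{r,k}\in\mathbb{S}^{d-1}$, the meaningful force is the Riemannian gradient, obtained by applying the projection $(I-\sigma\sigma^\top)$. I will remark that this projection removes only the radial component. It preserves the sign of the first-order change in $\sigma^\top\mathbf{e}_{a_k}$, because the radial part contributes $\sigma^\top\mathbf{e}_{a_k}$ times a scalar that is annihilated when moving along the sphere. So the ambient statement in Eq.~\eqref{eq:low_gradient}, which is exactly what the theorem asserts, suffices, and the attraction interpretation carries over to the tangent direction.
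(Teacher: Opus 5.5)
Your derivation of Eq.~\eqref{eq:low_gradient} is the paper's proof: specialize Lemma~\ref{lem:infonce_gradient} with $\sigma=\sigma_{r,k}$, $x^{+}=\mathbf{e}_{a_k}$, $\mathcal{X}^{-}=\mathcal{E}_{\mathrm{low}}^{-}$ and $\tau=\tau_{\mathrm{low}}$. The paper justifies the ``therefore'' clause only by the signs of the coefficients. Your ambient first-order computation is correct and goes a bit further than the paper. Along $g=-\nabla_{\sigma_{r,k}}\ell_{\mathrm{low}}$, the change in $\sigma_{r,k}^\top\mathbf{e}_{a_k}$ is $\frac{\eta}{\tau_{\mathrm{low}}}\sum_{a'\neq a_k}p_{a'}(1-\mathbf{e}_{a'}^\top\mathbf{e}_{a_k})\ge 0$.

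Two of your add-ons are wrong, although neither is needed for the identity.

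First, the projection $(I-\sigma\sigma^\top)$ does not preserve that sign. The tangential change is $\mathbf{e}_{a_k}^\top g-(\sigma^\top g)(\sigma^\top\mathbf{e}_{a_k})$. The removed radial term is not annihilated, and it can dominate. Take $\mathbf{e}_{a_k}=(1,0)$, a single negative $\mathbf{e}_{a'}=(0,1)$, and $\sigma=(\cos\alpha,\sin\alpha)$. Then $g=\frac{1-p_{a_k}}{\tau_{\mathrm{low}}}(\mathbf{e}_{a_k}-\mathbf{e}_{a'})$ and
\[
\mathbf{e}_{a_k}^\top(I-\sigma\sigma^\top)g=\frac{1-p_{a_k}}{\tau_{\mathrm{low}}}\,\sin\alpha\,(\sin\alpha+\cos\alpha).
\]
This is negative for $\alpha\in(-\pi/4,0)$; at $\alpha=-20^\circ$ it is about $-0.20\,(1-p_{a_k})/\tau_{\mathrm{low}}$. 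The ambient value, by contrast, is $\mathbf{e}_{a_k}^\top g=(1-p_{a_k})/\tau_{\mathrm{low}}>0$. In fact the single-negative loss restricted to $\mathbb{S}^{d-1}$ is minimized at $(\mathbf{e}_{a_k}-\mathbf{e}_{a'})/\|\mathbf{e}_{a_k}-\mathbf{e}_{a'}\|_2$, not at the anchor. So on the sphere, descent can move $\sigma_{r,k}$ away from $\mathbf{e}_{a_k}$, and the attraction reading holds only for the ambient gradient.

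Second, the ``symmetric reading'' does not give first-order repulsion once there are two or more negatives. The change in $\sigma^\top\mathbf{e}_{a''}$ is proportional to $(1-p_{a_k})\mathbf{e}_{a_k}^\top\mathbf{e}_{a''}-p_{a''}-\sum_{a'\neq a_k,a''}p_{a'}\mathbf{e}_{a'}^\top\mathbf{e}_{a''}$. Take $\mathbf{e}_{a_k}=(1,0,0)$, $\mathbf{e}_{a''}=(\cos 30^\circ,\sin 30^\circ,0)$, a second negative $(0,0,1)$, $\sigma\approx(0.57,-0.6,0.57)$ and $\tau_{\mathrm{low}}=0.1$. Then this quantity is about $+0.43$. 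A competing anchor that is close to $\mathbf{e}_{a_k}$ but has small softmax weight gets pulled in along with the anchor.

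Keep the repulsion claim at the level of the signed decomposition, as the paper does. Drop the Riemannian remark, or restate it as a caveat that the spherical descent direction need not increase similarity to the anchor.
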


\begin{proof}
Apply Lemma~\ref{lem:infonce_gradient} with

\[
\sigma=\sigma_{r,k},\quad
x^{+}=\mathbf{e}_{a_k},\quad
\mathcal{X}^{-}
=
\{\mathbf{e}_{a'}\mid a'\in\mathcal{A}_{\mathrm{low}}^t,\ a'\neq a_k\},
\quad
\tau=\tau_{\mathrm{low}}.
\]

This gives Eq.~\eqref{eq:low_gradient}. The interpretation follows from the attractive positive term and repulsive negative terms in the resulting descent direction.

\end{proof}

Theorem~\ref{thm:low_entropy_contraction} shows that the low-entropy branch of ESA imposes an anchor-based geometry on segment embeddings. Since all segments associated with the same low-entropy attribute share the positive target $\mathbf{e}_{a_k}$, their optimization directions contain a common attractive component toward the same schema-level anchor. This reduces variation caused by row-specific values and table-specific surface forms, encouraging compact clusters around the corresponding attribute anchor.

At the same time, the negative terms in Eq.~\eqref{eq:low_gradient} prevent trivial collapse across different low-entropy attributes. Each low-entropy attribute defines its own local anchor, while competing anchors repel one another through the contrastive denominator. Thus, ESA encourages low-entropy segments to become compact within the same attribute and separated across distinct attributes. This provides an objective-level explanation for the low-entropy geometry observed in Figure~\ref{fig:segments}: stable attributes form compact semantic clusters that are less sensitive to row-specific noise. In the alignment--uniformity view~\citep{wang2020understanding}, this corresponds to an entropy-conditioned form of alignment between stable header--value realizations and domain-level attribute anchors.

\subsection{High-entropy Alignment as Instance Separation}

We next analyze the high-entropy branch of ESA. High-entropy attributes typically correspond to instance-specific information whose values vary substantially across rows. For such attributes, collapsing all segment embeddings toward a shared attribute anchor would remove useful entity-level distinctions. ESA therefore uses the row-specific contextualized value representation $\mathbf{e}^{(v)}_{r,k}$ as the positive target and contrasts it against value representations from other rows in the same table. This objective encourages segment embeddings to preserve local instance separability.

\begin{theorem}[High-entropy ESA induces value-mediated instance separation]
\label{thm:high_entropy_separation}

Consider a high-entropy attribute $a_k \in \mathcal{A}_{\mathrm{high}}^t$ and its segment embedding $\sigma_{r,k}$. Let $\ell_{\mathrm{high}}(\sigma_{r,k})$ be the high-entropy InfoNCE loss defined in Eq.~\eqref{eq:high_entropy_local_loss}. Then

\begin{equation}
-\nabla_{\sigma_{r,k}}\ell_{\mathrm{high}}
=
\frac{1}{\tau_{\mathrm{high}}}
\left[
(1-p_{r})\mathbf{e}^{(v)}_{r,k}
-
\sum_{r'\in\mathcal{B}^{t},\ r'\neq r}
p_{r'}\mathbf{e}^{(v)}_{r',k}
\right],
\label{eq:high_gradient}
\end{equation}

where $p_{r}$ is the softmax probability assigned to the positive value representation $\mathbf{e}^{(v)}_{r,k}$, and $p_{r'}$ is the probability assigned to a negative value representation $\mathbf{e}^{(v)}_{r',k}$. Therefore, minimizing the high-entropy ESA objective moves $\sigma_{r,k}$ toward its own row-specific value representation while pushing it away from value representations of other rows.
\end{theorem}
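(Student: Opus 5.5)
The plan mirrors the proof of Theorem~\ref{thm:low_entropy_contraction}: Theorem~\ref{thm:high_entropy_separation} is a direct instantiation of Lemma~\ref{lem:infonce_gradient}. I will apply the lemma with
\[
\sigma=\sigma_{r,k},\quad
x^{+}=\mathbf{e}^{(v)}_{r,k},\quad
\mathcal{X}^{-}=\{\mathbf{e}^{(v)}_{r',k}\mid r'\in\mathcal{B}^t,\ r'\neq r\},\quad
\tau=\tau_{\mathrm{high}},
\]
and check that Eq.~\eqref{eq:high_entropy_local_loss} then coincides with Eq.~\eqref{eq:generic_local_infonce}. The softmax probabilities $p(x^{+})$ and $p(\mathbf{e}^{(v)}_{r',k})$ of the lemma are exactly the quantities $p_r$ and $p_{r'}$ in the statement. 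Substituting into Eq.~\eqref{eq:generic_infonce_gradient} yields Eq.~\eqref{eq:high_gradient} verbatim.

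The one point needing care, and the main obstacle, is that the targets are not constants. Here $\mathbf{e}^{(v)}_{r,k}$ is itself produced by $F_\theta$ and is correlated with $\sigma_{r,k}$ through the shared encoder, unlike the low-entropy case, where the anchors $\mathbf{e}_{a'}$ come from the separate encoder $f_{\theta'}$. I will state explicitly that the gradient is taken with respect to $\sigma_{r,k}$ as a free variable, holding the targets fixed, which is the partial derivative the lemma computes. This matches the local-force interpretation announced at the start of the appendix. The $\ell_2$-normalization is irrelevant to the formula itself, since the lemma differentiates the unconstrained expression. If desired, I will remark that projecting onto the tangent space of $\mathbb{S}^{d-1}$ preserves the sign structure of the two components.

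Finally, I will read off the interpretation. The coefficient $1-p_r$ is nonnegative, so the descent direction has a component along $\mathbf{e}^{(v)}_{r,k}$. Each $p_{r'}$ is nonnegative, so the descent direction has components opposite to the other rows' value representations. Via $\|u-v\|_2^2=2(1-u^\top v)$, a first-order step therefore increases $\sigma_{r,k}^\top\mathbf{e}^{(v)}_{r,k}$ and decreases $\sigma_{r,k}^\top\mathbf{e}^{(v)}_{r',k}$ when the repulsive terms dominate the cross-correlations. This corresponds to moving toward the row's own value representation and away from the other rows' values, which is the separation claim.
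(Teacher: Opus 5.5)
Your proposal is correct and matches the paper's proof, which is exactly the one-line instantiation of Lemma~\ref{lem:infonce_gradient} with $\sigma=\sigma_{r,k}$, $x^{+}=\mathbf{e}^{(v)}_{r,k}$, the other rows' value representations $\mathbf{e}^{(v)}_{r',k}$ as negatives, and $\tau=\tau_{\mathrm{high}}$. One correction to your side remark: the low-entropy anchors are not decoupled from $\sigma_{r,k}$ either, because $\mathbf{e}_{a_k}$ is a direct input to $g$ and is also added to the token embeddings of its segment before they enter $F_\theta$; so the convention of differentiating in $\sigma_{r,k}$ with the targets held fixed is needed equally for both theorems, and the paper uses it implicitly in both.
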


\begin{proof}
Apply Lemma~\ref{lem:infonce_gradient} with
\[
\sigma=\sigma_{r,k},\quad
x^{+}=\mathbf{e}^{(v)}_{r,k},\quad
\mathcal{X}^{-}
=
\{\mathbf{e}^{(v)}_{r',k}\mid r'\in\mathcal{B}^{t},\ r'\neq r\},
\quad
\tau=\tau_{\mathrm{high}}.
\]

This gives Eq.~\eqref{eq:high_gradient}. The interpretation follows from the attractive positive term and repulsive negative terms in the resulting descent direction.

\end{proof}

Theorem~\ref{thm:high_entropy_separation} shows that the high-entropy branch of ESA imposes an instance-discriminative geometry on segment embeddings. Unlike the low-entropy case, where multiple rows share the same attribute-level positive target, each high-entropy segment is aligned with its own row-specific value representation. Thus, ESA avoids contracting all realizations of a high-entropy attribute toward a single centroid and instead preserves the information needed to distinguish rows that share the same schema position but differ in value content.

The negative terms in Eq.~\eqref{eq:high_gradient} further induce controlled dispersion among row-specific realizations. Since negatives are sampled from other rows associated with the same high-entropy attribute, segment embeddings are separated relative to the empirical value distribution within the table rather than pushed toward arbitrary points on the hypersphere. This is particularly important for titles, names, identifiers, and other entity-specific fields, where schema-level anchoring would erase discriminative information. In the alignment--uniformity perspective, this corresponds to a entropy-conditioned analogue of uniformity: high-entropy segments remain dispersed while being constrained by contextualized value evidence. This explains the geometry observed in Figure~\ref{fig:segments}, where NAVI preserves row-level discriminability.

\subsection{Combined Geometry of ESA}

The preceding analysis shows that ESA induces a role-aware geometry in the segment embedding space. Rather than applying a single contrastive structure to all attributes, ESA changes the alignment target according to the empirical entropy of the associated column. Low-entropy attributes are aligned with shared context-free attribute anchors, yielding anchor-directed attraction and schema-level contraction. High-entropy attributes are aligned with row-specific value representations, yielding value-mediated separation and preserving instance-level discriminability. Thus, ESA jointly encourages semantic consistency for stable attributes and discriminability for entity-specific attributes.

This can be viewed as an entropy-conditioned adaptation of the alignment--uniformity perspective of contrastive learning~\citep{wang2020understanding}. Low-entropy routing emphasizes alignment to global attribute anchors, while high-entropy routing introduces a localized uniformity-like effect by contrasting row-specific value realizations within the same table. Importantly, this does not imply global uniformity over the entire hypersphere; rather, it induces controlled dispersion within high-entropy segments. This role-aware geometry explains why NAVI can contract stable schema anchors without collapsing entity-specific information, supporting our central motivation that column-level distributional evidence should be incorporated selectively according to the heterogeneous semantic roles of attributes.

\newpage

\section{Further Analysis on the Geometry of Segment Embeddings}
\label{app:supplementary}

To further examine how NAVI organizes segment embeddings, we visualize segment embeddings from five Movie tables using t-SNE, grouping segment embeddings by their entropy category (low vs. high) and overlaying table-wise convex hulls (Figure~\ref{fig:low_high_segments}). The resulting geometry provides qualitative evidence that NAVI realizes cross-table generalization. For BERT, segment embeddings are constructed by mean-pooling the contextualized token embeddings of each header–value span, ensuring a comparable segment representation across models.

\begin{figure*}[!h]
    \centering
    \caption{t-SNE projections of header–value segment embeddings from five Movie tables, grouped by entropy category. Gray convex hulls correspond to individual tables. For low entropy segment embeddings, points are additionally labeled as Best Rating or Worst Rating.}
    \begin{subfigure}[b]{0.49\textwidth}
        \includegraphics[width=\textwidth]{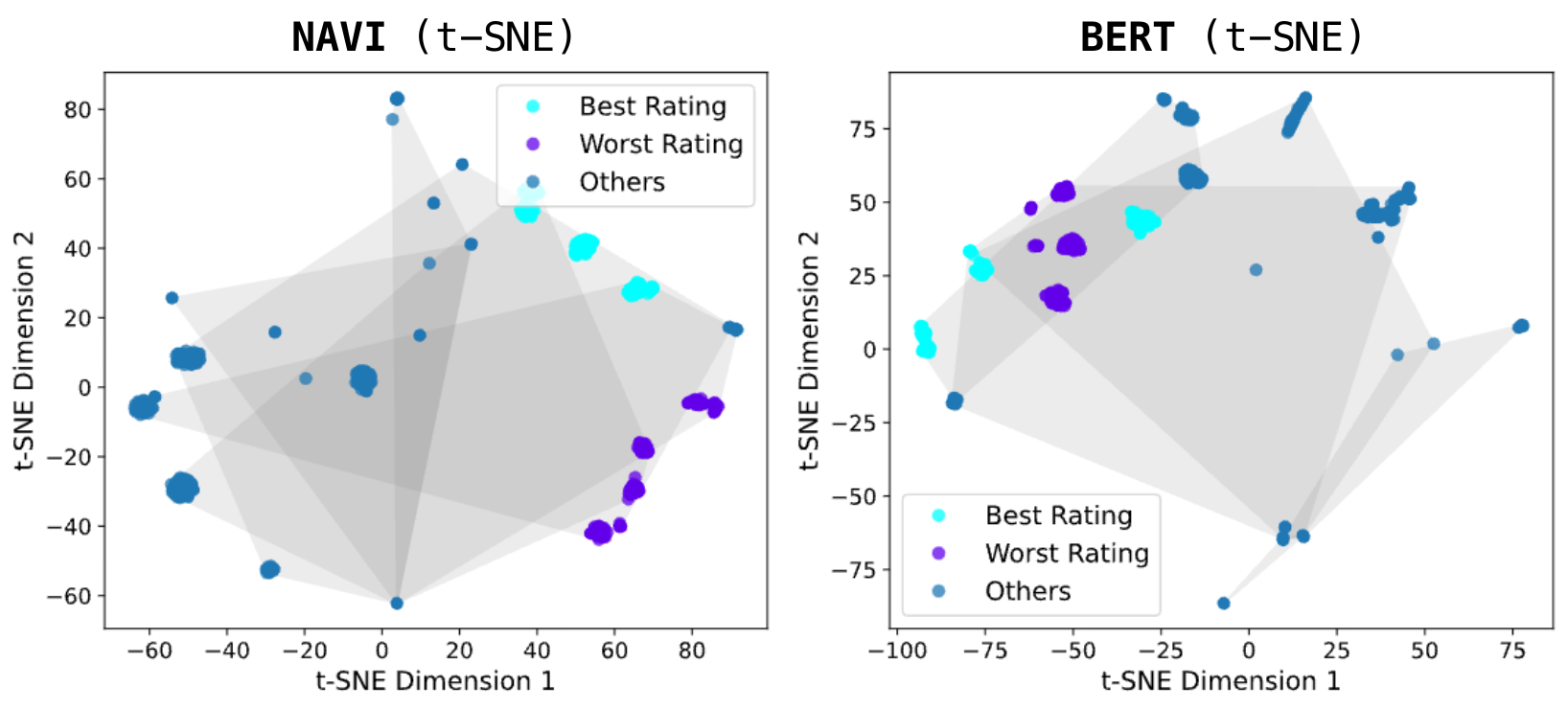} 
        \label{fig:subfig1}
        \vspace{-0.3cm}
        \caption{Low entropy segment embeddings.}
    \end{subfigure}
    \hfill
    \begin{subfigure}[b]{0.49\textwidth}
    \includegraphics[width=\textwidth]{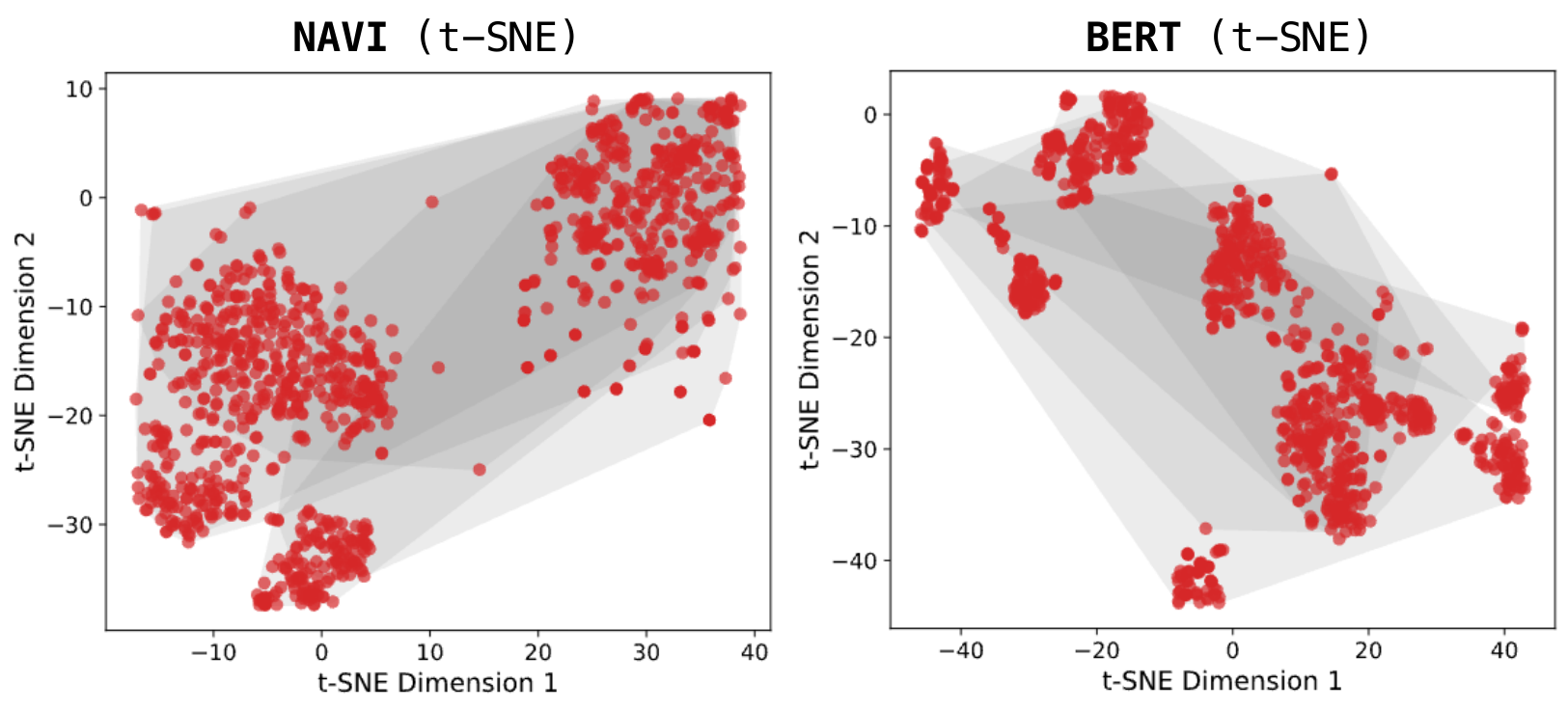}
        \label{fig:subfig2}
        \vspace{-0.3cm}
        \caption{High entropy segment distribution.}
    \end{subfigure}
    \label{fig:low_high_segments}
\end{figure*}

Low-entropy segment embeddings correspond to stable domain concepts (e.g., ratings, director). To assess whether models recover these semantics, we color segment embeddings using two coherent groups—Best Rating and Worst Rating. Under NAVI, the two groups form clean, well-separated clusters that persist across tables, with overlapping convex hulls indicating that the geometry is driven by shared cross-table value distributions rather than table identity. This reflects strong semantic discrimination and domain-level invariance. Overall, NAVI collapses surface-form variation while preserving core domain distinctions, yielding representations that generalize consistently across heterogeneous tables.

High-entropy segment embeddings represent entity-specific content (e.g., names, descriptions). NAVI distributes these segment embeddings broadly, avoiding collapse even when within-table contexts are similar. The table hulls heavily overlap, showing that representations do not cling to table identity. BERT, however, forms several dense, table-specific clumps, indicating that its contextual embedding remains sensitive to table-local patterns and fails to maintain row-level separability across tables. NAVI’s geometry thus reflects that entity-specific values remain distinguishable without being entangled with schema or table-specific quirks.

\newpage

\section{Additional Related Work: Predictive Tabular Learning}
While the main paper focuses on table understanding and representation learning for heterogeneous tables, tabular data has also been studied extensively from the perspective of predictive modeling, generative modeling, and language-model-based table reasoning. We briefly discuss these broader directions and clarify how they differ from our goal of learning semantically consistent representations across heterogeneous in-domain tables.

Across these directions, prior work is largely organized around task-specific prediction objectives, such as predicting labels, missing values, textual answers, or executable programs. In contrast, our goal is not to optimize a single downstream task, but to learn reusable representations that resolve semantic inconsistency across heterogeneous in-domain tables, where latent attributes, their header realizations, and their value distributions may not be consistently aligned.

\subsection{Supervised Prediction on Tabular Features}
A long line of work studies tabular learning as a supervised prediction problem, where the goal is to estimate target labels or continuous values from fixed input features. Classical machine learning methods, especially gradient-boosted decision trees~\citep{chen2016xgboost,ke2017lightgbm,prokhorenkova2018catboost} remain strong baselines for classification and regression on structured tables. These methods are effective at modeling feature interactions, handling heterogeneous feature types, and achieving strong predictive performance under task-specific supervision. More recent neural approaches~\citep{huang2020tabtransformer,somepalli2022saint,gorishniy2021revisiting,tabpfn} further explore attention-based architectures, self-supervised objectives, and in-context prediction for tabular data.

These approaches typically assume fixed and well-defined feature spaces within a table or benchmark dataset. Our setting instead considers heterogeneous in-domain tables, where semantically related attributes may appear under different headers and identical headers may express different meanings depending on schema and column context.

\subsection{Generative Prediction for Imputation}
Another line of work applies generative modeling to tabular data, with applications including data synthesis, missing value imputation, and distributional modeling. Recent methods adapt diffusion models, score-based models, and related generative objectives to structured tables~\citep{kotelnikov2023tabddpm,kimstasy,zhang2024mixed,CDTD}. These approaches model tabular feature distributions and can be used to generate realistic synthetic rows or recover missing entries~\citep{zheng2022diffusion}. Related methods also explore conditional generation, mixed-type feature modeling, and robustness under incomplete observations.

These methods are related to our reconstruction-style objective, but they primarily model or complete tabular observations. In contrast, our masked segment modeling uses reconstruction to learn schema-aware header--value dependencies for representation learning.

\subsection{Language-mediated Prediction over Tables}
With the emergence of large language models, recent work has also studied tables as inputs to language-centric reasoning systems. Earlier table pretraining methods formulate table reasoning as question answering, semantic parsing, or sequence-to-sequence generation over structured inputs~\citep{tapas,tabert,tapex}. More recent systems adapt or instruction-tune large language models to interact with tables through natural language commands, table question answering, data analysis, or text-to-SQL-style reasoning~\citep{tapera,tablegpt2024,zhang-etal-2024-tablellama}. These methods are especially effective when tables are used as contexts for question answering, program generation, or symbolic reasoning.

These methods often treat tables as serialized prompts or external contexts for generating textual or symbolic outputs. Our work instead focuses on compact, reusable representations of heterogeneous tables themselves, which can complement LLM-table systems through retrieval, grounding, or preprocessing.


\newpage

\section{Implementation Details}
\label{app:implementation_detail}
\subsection{Architectural Details}
\label{app:architecture}
The context-free attribute encoder $f_{\theta'}$ introduced in Definition~\ref{def:glob_rep} is implemented as a lightweight BERT-based module that maps canonical attribute identifiers to context-independent attribute embeddings. The encoder uses the BERT tokenizer and embedding layer, followed by two Transformer layers selected from the pretrained BERT model, to leverage lexical and compositional priors for short attribute strings.

The use of a shallow encoder balances expressivity and efficiency. Since domain attributes are typically represented by short textual identifiers rather than full natural-language sentences, a lightweight encoder is sufficient to capture their lexical and compositional structure while avoiding unnecessary sentence-level contextualization. Using more layers may overfit to discourse-level semantics that are less relevant for attribute names, whereas using fewer layers limits the encoder's ability to model compositional attribute phrases.

We use layers 8 and 9 of the pretrained BERT encoder. This choice is motivated by prior analyses of BERT~\citep{clark-etal-2019-bert}, which suggest that mid-to-deep layers capture syntactic and relational dependencies, while earlier layers focus more on local lexical information and final layers are more strongly shaped by sentence-level aggregation and task-specific adaptation. These properties make layers 8 and 9 suitable for modeling attribute identifiers, which are often short noun phrases requiring lexical and syntactic interpretation but not full discourse-level reasoning.

Given an attribute string $a$, the encoder first tokenizes it using the BERT tokenizer and obtains initial token embeddings through the BERT embedding layer:

\[
\mathbf{u}^{(0)} = \mathrm{BertEmbeddings}(\mathbf{x}(a)).
\]

The resulting sequence is then passed through two selected Transformer layers:

\begin{align*}
    \mathbf{u}^{(1)} = \mathrm{EncoderLayer}_{8}(\mathbf{u}^{(0)}),\\
    \mathbf{u}^{(2)} = \mathrm{EncoderLayer}_{9}(\mathbf{u}^{(1)}).
\end{align*}

The context-free attribute embedding $\mathbf{e}_a$ is obtained by mean pooling over non-padding tokens:

\[
\mathbf{e}_a
=
\frac{\sum_{i=1}^{n} \mathbf{u}^{(2)}_i \cdot \mathrm{mask}_i}
{\sum_{i=1}^{n} \mathrm{mask}_i}.
\]

This corresponds to the implementation of $f_{\theta'}$ and $\mathrm{Pool}(\cdot)$ in Definition~\ref{def:glob_rep}.

The encoder supports single attribute strings, lists of attributes, and batched attribute lists, automatically adjusting the output dimensionality and applying attention masks for padded inputs.

\paragraph{Projection Layer for Segment Embeddings.}

The projection network $g(\cdot)$ used in the segment embedding definition in Section~\ref{sec:entropy} combines three sources of evidence: the context-free attribute embedding $\mathbf{e}_{a_k}$, the contextualized header representation $\mathbf{e}^{(h)}_{r,k}$, and the contextualized value representation $\mathbf{e}^{(v)}_{r,k}$. These components correspond respectively to the domain-level attribute prior, the table-specific header realization, and the row-specific value realization of a segment.

Given a batch of segment components
$\mathbf{E}_a \in \mathbb{R}^{B \times K \times D}$,
$\mathbf{E}^{(h)} \in \mathbb{R}^{B \times K \times D}$, and
$\mathbf{E}^{(v)} \in \mathbb{R}^{B \times K \times D}$,
where $B$ is the batch size, $K$ is the number of segments per row, and $D$ is the hidden dimension, the projection first concatenates the three representations along the feature dimension:

\[
\mathbf{z}_{r,k}
=
\mathbf{e}_{a_k}
\parallel
\mathbf{e}^{(h)}_{r,k}
\parallel
\mathbf{e}^{(v)}_{r,k}
\in \mathbb{R}^{3D}.
\]

The concatenated representation is then passed through a two-layer feedforward network with GELU activation, dropout, and layer normalization:

\begin{align*}
    \mathbf{h}_{r,k}
    &=
    \mathrm{LayerNorm}
    \left(
    \mathrm{GELU}
    \left(
    \mathrm{Linear}_{3D \rightarrow 2D}(\mathbf{z}_{r,k})
    \right)
    \right),\\
    \sigma_{r,k}
    &=
    \mathrm{LayerNorm}
    \left(
    \mathrm{Linear}_{2D \rightarrow D}
    \left(
    \mathrm{Dropout}(\mathbf{h}_{r,k})
    \right)
    \right).
\end{align*}

The resulting vector $\sigma_{r,k}$ is the segment embedding used in Entropy-driven Segment Alignment.

This expansion--compression design follows the feedforward structure commonly used in Transformer architectures~\citep{vaswani2017attention}. The intermediate expansion allows nonlinear interactions among attribute-level, header-level, and value-level signals before projecting them back to the model dimension. Layer normalization~\citep{layernorm} improves optimization stability, while the intermediate $2D$ representation provides sufficient capacity to capture dependencies among the components of each header--value segment.

\subsection{Dataset Preprocessing}
\label{app:data}
Our dataset preprocessing pipeline is designed to optimize the quality and compatibility of tabular data for BERT-based language model training. The preprocessing consists of three main stages: data cleaning, BERT vocabulary validation, and tokenization optimization.

\paragraph{Data Cleaning and Normalization}
The raw tabular data undergoes several cleaning steps to ensure consistency and quality. First, we flatten nested JSON structures. 
\begin{align*}
&\text{\texttt{"actors": [\{"name": "allan"\}, \{"name": "daniel"\}]}} \rightarrow \\
&\text{\texttt{"actors.0.name": "allan", "actors.1.name": "daniel"}}
\end{align*}

\vspace{-0.32cm}

This creates a uniform representation where each row is represented as a flat dictionary of key-value pairs. This flattening process preserves the hierarchical structure through dot-separated keys. Next, we handle indexed fields that represent repeated attributes. To prevent information overload and maintain computational efficiency, we sample a maximum of 3 indexed fields per field type, prioritizing the first occurrences to maintain data consistency.

\paragraph{BERT Vocabulary Validation}
A critical challenge in training BERT on multilingual tabular data is the model's limited vocabulary coverage for non-English languages. To address this, we implement a BERT vocabulary validation step that filters out tables containing content that cannot be effectively tokenized by the BERT tokenizer.

For each table, we extract meaningful text fields (excluding URLs, pure numbers, and very short strings) and tokenize them using the BERT tokenizer. We calculate the ratio of unknown tokens ([UNK]) to total tokens for each field. Tables where more than 30\% of the text fields contain excessive unknown tokens (threshold: 30\% UNK ratio) are excluded from training. This filtering ensures that the model trains on data it can meaningfully process, significantly reducing the proportion of uninformative [UNK] tokens during training.

\paragraph{Tokenization Optimization}
Finally, to maximize the utility of the remaining data while respecting BERT's token limit constraints, we implement field-level truncation: Individual fields that exceed 20 tokens are truncated to fit within this limit, preserving the most important information while maintaining field names and separators.

\paragraph{Preprocessing Statistics}
Our preprocessing pipeline processes data from 100 different e-commerce websites across multiple languages and domains. The BERT vocabulary validation step typically filters out 60-70\% of rows containing significant non-English content, resulting in a dataset focused on English-language e-commerce data that can be effectively processed by BERT.

The final preprocessed dataset maintains the structural information of the original tables while ensuring compatibility with BERT's tokenization scheme, enabling effective representation learning for tabular data through masked language modeling objectives.

This approach addresses the fundamental challenge of applying English-centric language models to multilingual structured data, ensuring that the training process focuses on content that the model can meaningfully learn from while preserving the rich structural information inherent in tabular data.

\subsection{Training Procedure}

\begin{algorithm}[h]
\caption{\algname{} Training Procedure}
\label{alg:training}
\begin{algorithmic}[1]
\REQUIRE Domain tables $\mathcal{T}$, model parameters $\theta$, alignment weight $\lambda_{\mathrm{align}}$, masking configuration $\mathsf{MaskCfg}$
\ENSURE Trained parameters $\theta^*$

\STATE Initialize model parameters $\theta$, optimizer, and gradient scaler
\FOR{epoch $=1,\dots,T$}
    \STATE Construct table groups $\mathcal{G}=\{G_1,\dots,G_M\}$ from $\mathcal{T}$
    \STATE Shuffle group order
    \FOR{each table group $G \in \mathcal{G}$}
        \FOR{each table $t \in G$}
            \STATE Compute the column entropy $\mathrm{H}_t(a)$ for each attribute $a \in \mathcal{A}^t$
            \STATE Construct attribute partitions $\mathcal{A}_{\mathrm{low}}^t$ and $\mathcal{A}_{\mathrm{high}}^t$ using $\theta_{\mathrm{low}}$ and $\theta_{\mathrm{high}}$
        \ENDFOR
        \STATE Initialize a stratified sampler over rows from tables in $G$
        \FOR{each batch $\mathcal{B}$ sampled from $G$}

            \STATE Serialize each row $r \in \mathcal{B}$ into segment sequence $\mathbf{x}(r)$
            \STATE Compute context-free attribute embeddings $\mathbf{e}_{a_k}$ using $f_{\theta'}$

            \STATE \textcolor{red}{/* Masked Segment Modeling (Section~\ref{sec:masking}) */}
            \STATE Apply structure-aware masking using $\mathsf{MaskCfg}$ to obtain masked inputs and labels
            \STATE Forward masked inputs through $F_\theta$ to compute MSM logits
            \STATE Compute $\mathcal{L}_{\mathrm{msm}}$ over masked token sets $\mathcal{M}_h$, $\mathcal{M}_v$, and $\mathcal{M}_r$

            \STATE \textcolor{red}{/* Entropy-driven Segment Alignment (Section~\ref{sec:entropy}) */}
            \STATE Forward unmasked inputs through $F_\theta$ to obtain contextualized token embeddings
            \STATE Extract $\mathbf{e}^{(h)}_{r,k}$ and $\mathbf{e}^{(v)}_{r,k}$ for each segment $\mathbf{s}_{r,k}$
            \STATE Construct segment embeddings $\sigma_{r,k}=g(\mathbf{e}_{a_k}\parallel\mathbf{e}^{(h)}_{r,k}\parallel\mathbf{e}^{(v)}_{r,k})$
            \STATE Compute $\mathcal{L}_{\mathrm{low}}^t$ and $\mathcal{L}_{\mathrm{high}}^t$ according to the source table $t$ of each row
            \STATE Compute $\mathcal{L}_{\mathrm{align}}=\frac{1}{|\mathcal{T}_{\mathcal{B}}|}\sum_{t\in\mathcal{T}_{\mathcal{B}}}(\mathcal{L}_{\mathrm{low}}^t+\mathcal{L}_{\mathrm{high}}^t)$

            \STATE Compute $\mathcal{L}_{\mathrm{total}}=\mathcal{L}_{\mathrm{msm}}+\lambda_{\mathrm{align}}\mathcal{L}_{\mathrm{align}}$
            \STATE Update $\theta$ using $\mathcal{L}_{\mathrm{total}}$
        \ENDFOR
    \ENDFOR
\ENDFOR
\STATE \textbf{return} $\mathcal{M}_{\theta^*}$
\end{algorithmic}
\end{algorithm}

For each batch constructed as described in Section~\ref{sec:entropy}, the model performs two forward passes. The masked serialized rows are first passed through the table encoder $F_\theta$ to compute MSM logits and the masked segment modeling loss $\mathcal{L}_{\mathrm{msm}}$ over $\mathcal{M}_h$, $\mathcal{M}_v$, and $\mathcal{M}_r$ defined in Section~\ref{sec:masking}. The unmasked rows are then passed through $F_\theta$ to obtain contextualized token embeddings for entropy-driven segment alignment. We extract $\mathbf{e}^{(h)}_{r,k}$ and $\mathbf{e}^{(v)}_{r,k}$ following Definition~\ref{def:cont_rep}, combine them with the context-free attribute embedding $\mathbf{e}_{a_k}$ from Definition~\ref{def:glob_rep}, and compute the segment embedding $\sigma_{r,k}=g(\mathbf{e}_{a_k}\parallel\mathbf{e}^{(h)}_{r,k}\parallel\mathbf{e}^{(v)}_{r,k})$. These segment embeddings are used to compute $\mathcal{L}_{\mathrm{low}}^t$ and $\mathcal{L}_{\mathrm{high}}^t$ under the entropy-based attribute partitions in Section~\ref{sec:entropy}. The total loss follows the main objective: $\mathcal{L}_{\mathrm{total}}=\mathcal{L}_{\mathrm{msm}}+\lambda_{\mathrm{align}}\mathcal{L}_{\mathrm{align}}$.

\end{document}